\DeclareMathOperator*{\argmax}{arg\,max}
\DeclareMathOperator*{\argmin}{arg\,min}
\newif\iffinal
    \newcommand{\fix}[1]{}
    \newcommand{\yuxin}[1]{}
    \newcommand{\yeol}[1]{}
    \newcommand{\yeolnote}[1]{}
    \newcommand{\yuxininline}[1]{}
    \newcommand{\yeol}[1]{{\color{black} #1}}
    \newcommand{\yeolnote}[1]{{\textcolor{blue}{[{\bf Yeol:} #1]}}}
    \newcommand{\fix}[1]{{\color{red} #1}}
    \newcommand{\yuxin}[1]{\todo[fancyline,color=NavyBlue!40]{YC: #1}\xspace}
    \newcommand{\yuxininline}[1]{\textcolor{NavyBlue}{[YC: #1]}}
\newcommand{\denselist}{\itemsep 0pt\topsep-10pt\partopsep-3pt}
\newtheorem{theorem}{Theorem}
\title{Efficient Online Decision Tree Learning with Active Feature Acquisition}
\author{
Arman Rahbar$^1$
\and
Ziyu Ye$^2$\and
Yuxin Chen$^2$\And
Morteza Haghir Chehreghani$^1$
\affiliations
$^1$Chalmers University of Technology\\
$^2$University of Chicago
\emails
armanr@chalmers.se,
ziyuye@uchicago.edu,
chenyuxin@uchicago.edu,
morteza.chehreghani@chalmers.se
}
\begin{document}
%%%%% NEW MATH DEFINITIONS %%%%%

% \usepackage{amsmath,amsfonts,bm}
% \usepackage{amssymb}
% \usepackage{amsthm}

% Mark sections of captions for referring to divisions of figures
\newcommand{\figleft}{{\em (Left)}}
\newcommand{\figcenter}{{\em (Center)}}
\newcommand{\figright}{{\em (Right)}}
\newcommand{\figtop}{{\em (Top)}}
\newcommand{\figbottom}{{\em (Bottom)}}
\newcommand{\captiona}{{\em (a)}}
\newcommand{\captionb}{{\em (b)}}
\newcommand{\captionc}{{\em (c)}}
\newcommand{\captiond}{{\em (d)}}

% Highlight a newly defined term
\newcommand{\newterm}[1]{{\bf #1}}

% Figure reference, lower-case.
\def\figref#1{figure~\ref{#1}}
% Figure reference, capital. For start of sentence
\def\Figref#1{Figure~\ref{#1}}
\def\twofigref#1#2{figures \ref{#1} and \ref{#2}}
\def\quadfigref#1#2#3#4{figures \ref{#1}, \ref{#2}, \ref{#3} and \ref{#4}}
% Section reference, lower-case.
\def\secref#1{section~\ref{#1}}
% Section reference, capital.
\def\Secref#1{Section~\ref{#1}}
% Reference to two sections.
\def\twosecrefs#1#2{sections \ref{#1} and \ref{#2}}
% Reference to three sections.
\def\secrefs#1#2#3{sections \ref{#1}, \ref{#2} and \ref{#3}}
% Reference to an equation, lower-case.
\def\eqref#1{equation~\ref{#1}}
% Reference to an equation, upper case
\def\Eqref#1{Equation~\ref{#1}}
% A raw reference to an equation---avoid using if possible
\def\plaineqref#1{\ref{#1}}
% Reference to a chapter, lower-case.
\def\chapref#1{chapter~\ref{#1}}
% Reference to an equation, upper case.
\def\Chapref#1{Chapter~\ref{#1}}
% Reference to a range of chapters
\def\rangechapref#1#2{chapters\ref{#1}--\ref{#2}}
% Reference to an algorithm, lower-case.
\def\algref#1{algorithm~\ref{#1}}
% Reference to an algorithm, upper case.
\def\Algref#1{Algorithm~\ref{#1}}
\def\twoalgref#1#2{algorithms \ref{#1} and \ref{#2}}
\def\Twoalgref#1#2{Algorithms \ref{#1} and \ref{#2}}
% Reference to a part, lower case
\def\partref#1{part~\ref{#1}}
% Reference to a part, upper case
\def\Partref#1{Part~\ref{#1}}
\def\twopartref#1#2{parts \ref{#1} and \ref{#2}}

\def\ceil#1{\lceil #1 \rceil}
\def\floor#1{\lfloor #1 \rfloor}
\def\1{\bm{1}}
\newcommand{\train}{\mathcal{D}}
\newcommand{\valid}{\mathcal{D_{\mathrm{valid}}}}
\newcommand{\test}{\mathcal{D_{\mathrm{test}}}}

\def\eps{{\epsilon}}

% % References
% ----------------------------------------------------------------
\newcommand{\defref}[1]{Definition~\ref{#1}}
\newcommand{\tableref}[1]{Table~\ref{#1}}
\newcommand{\eqnref}[1]{\text{Eq.}~(\ref{#1})}
\newcommand{\appref}[1]{Appendix \ref{#1}}
\newcommand{\thmref}[1]{Theorem~\ref{#1}}
\newcommand{\corref}[1]{Corollary~\ref{#1}}
\newcommand{\propref}[1]{Proposition~\ref{#1}}
\newcommand{\lemref}[1]{Lemma~\ref{#1}}
\renewcommand{\algref}[1]{Algorithm~\ref{#1}}
\newcommand{\egref}[1]{Example~\ref{#1}}
\newcommand{\lnref}[1]{Line~\ref{#1}}

% Random variables
\def\reta{{\textnormal{$\eta$}}}
\def\ra{{\textnormal{a}}}
\def\rb{{\textnormal{b}}}
\def\rc{{\textnormal{c}}}
\def\rd{{\textnormal{d}}}
\def\re{{\textnormal{e}}}
\def\rf{{\textnormal{f}}}
\def\rg{{\textnormal{g}}}
\def\rh{{\textnormal{h}}}
\def\ri{{\textnormal{i}}}
\def\rj{{\textnormal{j}}}
\def\rk{{\textnormal{k}}}
\def\rl{{\textnormal{l}}}
% rm is already a command, just don't name any random variables m
\def\rn{{\textnormal{n}}}
\def\ro{{\textnormal{o}}}
\def\rp{{\textnormal{p}}}
\def\rq{{\textnormal{q}}}
\def\rr{{\textnormal{r}}}
\def\rs{{\textnormal{s}}}
\def\rt{{\textnormal{t}}}
\def\ru{{\textnormal{u}}}
\def\rv{{\textnormal{v}}}
\def\rw{{\textnormal{w}}}
\def\rx{{\textnormal{x}}}
\def\ry{{\textnormal{y}}}
\def\rz{{\textnormal{z}}}

% Random vectors
\def\rvepsilon{{\mathbf{\epsilon}}}
\def\rvtheta{{\mathbf{\theta}}}
\def\boldtheta{{\boldsymbol{\theta}}}
\def\boldalpha{{\boldsymbol{\alpha}}}
\def\boldbeta{{\boldsymbol{\beta}}}
\def\rva{{\mathbf{a}}}
\def\rvb{{\mathbf{b}}}
\def\rvc{{\mathbf{c}}}
\def\rvd{{\mathbf{d}}}
\def\rve{{\mathbf{e}}}
\def\rvf{{\mathbf{f}}}
\def\rvg{{\mathbf{g}}}
\def\rvh{{\mathbf{h}}}
\def\rvu{{\mathbf{i}}}
\def\rvj{{\mathbf{j}}}
\def\rvk{{\mathbf{k}}}
\def\rvl{{\mathbf{l}}}
\def\rvm{{\mathbf{m}}}
\def\rvn{{\mathbf{n}}}
\def\rvo{{\mathbf{o}}}
\def\rvp{{\mathbf{p}}}
\def\rvq{{\mathbf{q}}}
\def\rvr{{\mathbf{r}}}
\def\rvs{{\mathbf{s}}}
\def\rvt{{\mathbf{t}}}
\def\rvu{{\mathbf{u}}}
\def\rvv{{\mathbf{v}}}
\def\rvw{{\mathbf{w}}}
\def\rvx{{\mathbf{x}}}
\def\rvy{{\mathbf{y}}}
\def\rvz{{\mathbf{z}}}

% Elements of random vectors
\def\erva{{\textnormal{a}}}
\def\ervb{{\textnormal{b}}}
\def\ervc{{\textnormal{c}}}
\def\ervd{{\textnormal{d}}}
\def\erve{{\textnormal{e}}}
\def\ervf{{\textnormal{f}}}
\def\ervg{{\textnormal{g}}}
\def\ervh{{\textnormal{h}}}
\def\ervi{{\textnormal{i}}}
\def\ervj{{\textnormal{j}}}
\def\ervk{{\textnormal{k}}}
\def\ervl{{\textnormal{l}}}
\def\ervm{{\textnormal{m}}}
\def\ervn{{\textnormal{n}}}
\def\ervo{{\textnormal{o}}}
\def\ervp{{\textnormal{p}}}
\def\ervq{{\textnormal{q}}}
\def\ervr{{\textnormal{r}}}
\def\ervs{{\textnormal{s}}}
\def\ervt{{\textnormal{t}}}
\def\ervu{{\textnormal{u}}}
\def\ervv{{\textnormal{v}}}
\def\ervw{{\textnormal{w}}}
\def\ervx{{\textnormal{x}}}
\def\ervy{{\textnormal{y}}}
\def\ervz{{\textnormal{z}}}

% Random matrices
\def\rmA{{\mathbf{A}}}
\def\rmB{{\mathbf{B}}}
\def\rmC{{\mathbf{C}}}
\def\rmD{{\mathbf{D}}}
\def\rmE{{\mathbf{E}}}
\def\rmF{{\mathbf{F}}}
\def\rmG{{\mathbf{G}}}
\def\rmH{{\mathbf{H}}}
\def\rmI{{\mathbf{I}}}
\def\rmJ{{\mathbf{J}}}
\def\rmK{{\mathbf{K}}}
\def\rmL{{\mathbf{L}}}
\def\rmM{{\mathbf{M}}}
\def\rmN{{\mathbf{N}}}
\def\rmO{{\mathbf{O}}}
\def\rmP{{\mathbf{P}}}
\def\rmQ{{\mathbf{Q}}}
\def\rmR{{\mathbf{R}}}
\def\rmS{{\mathbf{S}}}
\def\rmT{{\mathbf{T}}}
\def\rmU{{\mathbf{U}}}
\def\rmV{{\mathbf{V}}}
\def\rmW{{\mathbf{W}}}
\def\rmX{{\mathbf{X}}}
\def\rmY{{\mathbf{Y}}}
\def\rmZ{{\mathbf{Z}}}

% Elements of random matrices
\def\ermA{{\textnormal{A}}}
\def\ermB{{\textnormal{B}}}
\def\ermC{{\textnormal{C}}}
\def\ermD{{\textnormal{D}}}
\def\ermE{{\textnormal{E}}}
\def\ermF{{\textnormal{F}}}
\def\ermG{{\textnormal{G}}}
\def\ermH{{\textnormal{H}}}
\def\ermI{{\textnormal{I}}}
\def\ermJ{{\textnormal{J}}}
\def\ermK{{\textnormal{K}}}
\def\ermL{{\textnormal{L}}}
\def\ermM{{\textnormal{M}}}
\def\ermN{{\textnormal{N}}}
\def\ermO{{\textnormal{O}}}
\def\ermP{{\textnormal{P}}}
\def\ermQ{{\textnormal{Q}}}
\def\ermR{{\textnormal{R}}}
\def\ermS{{\textnormal{S}}}
\def\ermT{{\textnormal{T}}}
\def\ermU{{\textnormal{U}}}
\def\ermV{{\textnormal{V}}}
\def\ermW{{\textnormal{W}}}
\def\ermX{{\textnormal{X}}}
\def\ermY{{\textnormal{Y}}}
\def\ermZ{{\textnormal{Z}}}

% Vectors
\def\vzero{{\bm{0}}}
\def\vone{{\bm{1}}}
\def\vmu{{\bm{\mu}}}
\def\vtheta{{\bm{\theta}}}
\def\va{{\bm{a}}}
\def\vb{{\bm{b}}}
\def\vc{{\bm{c}}}
\def\vd{{\bm{d}}}
\def\ve{{\bm{e}}}
\def\vf{{\bm{f}}}
\def\vg{{\bm{g}}}
\def\vh{{\bm{h}}}
\def\vi{{\bm{i}}}
\def\vj{{\bm{j}}}
\def\vk{{\bm{k}}}
\def\vl{{\bm{l}}}
\def\vm{{\bm{m}}}
\def\vn{{\bm{n}}}
\def\vo{{\bm{o}}}
\def\vp{{\bm{p}}}
\def\vq{{\bm{q}}}
\def\vr{{\bm{r}}}
\def\vs{{\bm{s}}}
\def\vt{{\bm{t}}}
\def\vu{{\bm{u}}}
\def\vv{{\bm{v}}}
\def\vw{{\bm{w}}}
\def\vx{{\bm{x}}}
\def\vy{{\bm{y}}}
\def\vz{{\bm{z}}}

% Elements of vectors
\def\evalpha{{\alpha}}
\def\evbeta{{\beta}}
\def\evepsilon{{\epsilon}}
\def\evlambda{{\lambda}}
\def\evomega{{\omega}}
\def\evmu{{\mu}}
\def\evpsi{{\psi}}
\def\evsigma{{\sigma}}
\def\evtheta{{\theta}}
\def\eva{{a}}
\def\evb{{b}}
\def\evc{{c}}
\def\evd{{d}}
\def\eve{{e}}
\def\evf{{f}}
\def\evg{{g}}
\def\evh{{h}}
\def\evi{{i}}
\def\evj{{j}}
\def\evk{{k}}
\def\evl{{l}}
\def\evm{{m}}
\def\evn{{n}}
\def\evo{{o}}
\def\evp{{p}}
\def\evq{{q}}
\def\evr{{r}}
\def\evs{{s}}
\def\evt{{t}}
\def\evu{{u}}
\def\evv{{v}}
\def\evw{{w}}
\def\evx{{x}}
\def\evy{{y}}
\def\evz{{z}}

% Matrix
\def\mA{{\bm{A}}}
\def\mB{{\bm{B}}}
\def\mC{{\bm{C}}}
\def\mD{{\bm{D}}}
\def\mE{{\bm{E}}}
\def\mF{{\bm{F}}}
\def\mG{{\bm{G}}}
\def\mH{{\bm{H}}}
\def\mI{{\bm{I}}}
\def\mJ{{\bm{J}}}
\def\mK{{\bm{K}}}
\def\mL{{\bm{L}}}
\def\mM{{\bm{M}}}
\def\mN{{\bm{N}}}
\def\mO{{\bm{O}}}
\def\mP{{\bm{P}}}
\def\mQ{{\bm{Q}}}
\def\mR{{\bm{R}}}
\def\mS{{\bm{S}}}
\def\mT{{\bm{T}}}
\def\mU{{\bm{U}}}
\def\mV{{\bm{V}}}
\def\mW{{\bm{W}}}
\def\mX{{\bm{X}}}
\def\mY{{\bm{Y}}}
\def\mZ{{\bm{Z}}}
\def\mBeta{{\bm{\beta}}}
\def\mPhi{{\bm{\Phi}}}
\def\mLambda{{\bm{\Lambda}}}
\def\mSigma{{\bm{\Sigma}}}

% Tensor
% \DeclareMathAlphabet{\mathsfit}{\encodingdefault}{\sfdefault}{m}{sl}
% \SetMathAlphabet{\mathsfit}{bold}{\encodingdefault}{\sfdefault}{bx}{n}
\newcommand{\tens}[1]{\bm{\mathsfit{#1}}}
\def\tA{{\tens{A}}}
\def\tB{{\tens{B}}}
\def\tC{{\tens{C}}}
\def\tD{{\tens{D}}}
\def\tE{{\tens{E}}}
\def\tF{{\tens{F}}}
\def\tG{{\tens{G}}}
\def\tH{{\tens{H}}}
\def\tI{{\tens{I}}}
\def\tJ{{\tens{J}}}
\def\tK{{\tens{K}}}
\def\tL{{\tens{L}}}
\def\tM{{\tens{M}}}
\def\tN{{\tens{N}}}
\def\tO{{\tens{O}}}
\def\tP{{\tens{P}}}
\def\tQ{{\tens{Q}}}
\def\tR{{\tens{R}}}
\def\tS{{\tens{S}}}
\def\tT{{\tens{T}}}
\def\tU{{\tens{U}}}
\def\tV{{\tens{V}}}
\def\tW{{\tens{W}}}
\def\tX{{\tens{X}}}
\def\tY{{\tens{Y}}}
\def\tZ{{\tens{Z}}}

% Graph
\def\gA{{\mathcal{A}}}
\def\gB{{\mathcal{B}}}
\def\gC{{\mathcal{C}}}
\def\gD{{\mathcal{D}}}
\def\gE{{\mathcal{E}}}
\def\gF{{\mathcal{F}}}
\def\gG{{\mathcal{G}}}
\def\gH{{\mathcal{H}}}
\def\gI{{\mathcal{I}}}
\def\gJ{{\mathcal{J}}}
\def\gK{{\mathcal{K}}}
\def\gL{{\mathcal{L}}}
\def\gM{{\mathcal{M}}}
\def\gN{{\mathcal{N}}}
\def\gO{{\mathcal{O}}}
\def\gP{{\mathcal{P}}}
\def\gQ{{\mathcal{Q}}}
\def\gR{{\mathcal{R}}}
\def\gS{{\mathcal{S}}}
\def\gT{{\mathcal{T}}}
\def\gU{{\mathcal{U}}}
\def\gV{{\mathcal{V}}}
\def\gW{{\mathcal{W}}}
\def\gX{{\mathcal{X}}}
\def\gY{{\mathcal{Y}}}
\def\gZ{{\mathcal{Z}}}

% Sets of operators
\def\sA{{\mathbb{A}}}
\def\sB{{\mathbb{B}}}
\def\sC{{\mathbb{C}}}
\def\sD{{\mathbb{D}}}
% Don't use a set called E, because this would be the same as our symbol
% for expectation.
\def\sF{{\mathbb{F}}}
\def\sG{{\mathbb{G}}}
\def\sH{{\mathbb{H}}}
\def\sI{{\mathbb{I}}}
\def\sJ{{\mathbb{J}}}
\def\sK{{\mathbb{K}}}
\def\sL{{\mathbb{L}}}
\def\sM{{\mathbb{M}}}
\def\sN{{\mathbb{N}}}
\def\sO{{\mathbb{O}}}
\def\sP{{\mathbb{P}}}
\def\sQ{{\mathbb{Q}}}
\def\sR{{\mathbb{R}}}
\def\sS{{\mathbb{S}}}
\def\sT{{\mathbb{T}}}
\def\sU{{\mathbb{U}}}
\def\sV{{\mathbb{V}}}
\def\sW{{\mathbb{W}}}
\def\sX{{\mathbb{X}}}
\def\sY{{\mathbb{Y}}}
\def\sZ{{\mathbb{Z}}}

% Sets
\def\calA{{\mathcal{A}}}
\def\calB{{\mathcal{B}}}
\def\calC{{\mathcal{C}}}
\def\calD{{\mathcal{D}}}
\def\calE{{\mathcal{F}}}
\def\calF{{\mathcal{F}}}
\def\calG{{\mathcal{G}}}
\def\calH{{\mathcal{H}}}
\def\calI{{\mathcal{I}}}
\def\calJ{{\mathcal{J}}}
\def\calK{{\mathcal{K}}}
\def\calL{{\mathcal{L}}}
\def\calM{{\mathcal{M}}}
\def\calN{{\mathcal{N}}}
\def\calO{{\mathcal{O}}}
\def\calP{{\mathcal{P}}}
\def\calQ{{\mathcal{Q}}}
\def\calR{{\mathcal{R}}}
\def\calS{{\mathcal{S}}}
\def\calT{{\mathcal{T}}}
\def\calU{{\mathcal{U}}}
\def\calV{{\mathcal{V}}}
\def\calW{{\mathcal{W}}}
\def\calX{{\mathcal{X}}}
\def\calY{{\mathcal{Y}}}
\def\calZ{{\mathcal{Z}}}

% Entries of a matrix
\def\emLambda{{\Lambda}}
\def\emA{{A}}
\def\emB{{B}}
\def\emC{{C}}
\def\emD{{D}}
\def\emE{{E}}
\def\emF{{F}}
\def\emG{{G}}
\def\emH{{H}}
\def\emI{{I}}
\def\emJ{{J}}
\def\emK{{K}}
\def\emL{{L}}
\def\emM{{M}}
\def\emN{{N}}
\def\emO{{O}}
\def\emP{{P}}
\def\emQ{{Q}}
\def\emR{{R}}
\def\emS{{S}}
\def\emT{{T}}
\def\emU{{U}}
\def\emV{{V}}
\def\emW{{W}}
\def\emX{{X}}
\def\emY{{Y}}
\def\emZ{{Z}}
\def\emSigma{{\Sigma}}

% entries of a tensor
% Same font as tensor, without \bm wrapper
\newcommand{\etens}[1]{\mathsfit{#1}}
\def\etLambda{{\etens{\Lambda}}}
\def\etA{{\etens{A}}}
\def\etB{{\etens{B}}}
\def\etC{{\etens{C}}}
\def\etD{{\etens{D}}}
\def\etE{{\etens{E}}}
\def\etF{{\etens{F}}}
\def\etG{{\etens{G}}}
\def\etH{{\etens{H}}}
\def\etI{{\etens{I}}}
\def\etJ{{\etens{J}}}
\def\etK{{\etens{K}}}
\def\etL{{\etens{L}}}
\def\etM{{\etens{M}}}
\def\etN{{\etens{N}}}
\def\etO{{\etens{O}}}
\def\etP{{\etens{P}}}
\def\etQ{{\etens{Q}}}
\def\etR{{\etens{R}}}
\def\etS{{\etens{S}}}
\def\etT{{\etens{T}}}
\def\etU{{\etens{U}}}
\def\etV{{\etens{V}}}
\def\etW{{\etens{W}}}
\def\etX{{\etens{X}}}
\def\etY{{\etens{Y}}}
\def\etZ{{\etens{Z}}}

% The true underlying data generating distribution
\newcommand{\pdata}{p_{\rm{data}}}
% The empirical distribution defined by the training set
\newcommand{\ptrain}{\hat{p}_{\rm{data}}}
\newcommand{\Ptrain}{\hat{P}_{\rm{data}}}
% The model distribution
\newcommand{\pmodel}{p_{\rm{model}}}
\newcommand{\Pmodel}{P_{\rm{model}}}
\newcommand{\ptildemodel}{\tilde{p}_{\rm{model}}}
% Stochastic autoencoder distributions
\newcommand{\pencode}{p_{\rm{encoder}}}
\newcommand{\pdecode}{p_{\rm{decoder}}}
\newcommand{\precons}{p_{\rm{reconstruct}}}

\newcommand{\laplace}{\mathrm{Laplace}} % Laplace distribution

\newcommand{\E}{\mathbb{E}}
\newcommand{\Ls}{\mathcal{L}}
\newcommand{\R}{\mathbb{R}}
\newcommand{\emp}{\tilde{p}}
\newcommand{\lr}{\alpha}
\newcommand{\reg}{\lambda}
\newcommand{\rect}{\mathrm{rectifier}}
\newcommand{\softmax}{\mathrm{softmax}}
\newcommand{\sigmoid}{\sigma}
\newcommand{\softplus}{\zeta}
\newcommand{\KL}{D_{\mathrm{KL}}}
\newcommand{\Var}{\mathrm{Var}}
\newcommand{\standarderror}{\mathrm{SE}}
\newcommand{\Cov}{\mathrm{Cov}}
% Wolfram Mathworld says $L^2$ is for function spaces and $\ell^2$ is for vectors
% But then they seem to use $L^2$ for vectors throughout the site, and so does
% wikipedia.
\newcommand{\normlzero}{L^0}
\newcommand{\normlone}{L^1}
\newcommand{\normltwo}{L^2}
\newcommand{\normlp}{L^p}
\newcommand{\normmax}{L^\infty}

\newcommand{\parents}{Pa} % See usage in notation.tex. Chosen to match Daphne's book.

\let\ab\allowbreak

% \usepackage{etoolbox}

% % THEOREMS -------------------------------------------------------
% \newtheorem{theorem}{Theorem}%[section]
\newtheorem{corollary}[theorem]{Corollary}
\newtheorem{problem}[theorem]{Problem}

\newtheorem{lemma}[theorem]{Lemma}
\newtheorem{proposition}[theorem]{Proposition}
\newtheorem{definition}[theorem]{Definition}
\newtheorem{rem}[theorem]{Remark}
\numberwithin{equation}{section}

%\algnewcommand{\LineComment}[1]{\State \(\triangleright\) #1}

% ----------------------------------------------------------------
% Wrappers: Parens, brackets, etc
% -------------------------------------------------------------------------------
% \newcommand{\op}[1]{\operatorname{#1}}
\newcommand{\paren} [1] {\ensuremath{ \left( {#1} \right) }}
\newcommand{\bigparen} [1] {\ensuremath{ \Big( {#1} \Big) }}
\newcommand{\bracket}[1]{\left[#1\right]}

% Functions and Functionals
% --------------------------------------------------------
\renewcommand{\Pr}[1]{\ensuremath{\mathbb{P}\left[#1\right] }}
\newcommand{\expct}[1]{\mathbb{E}\left[#1\right]}
\newcommand{\expctover}[2]{\mathbb{E}_{#1}\!\left[#2\right]}
\newcommand{\abs}[1]{\left\vert#1\right\vert}
\def \argmax {\mathop{\rm arg\,max}}
\def \argmin {\mathop{\rm arg\,min}}

% asymptotics
\newcommand{\littleO}[1]{\ensuremath{o\paren{#1}}}
\newcommand{\bigO}[1]{\ensuremath{\mathcal{O}\paren{#1}}}
\newcommand{\bigTheta}[1]{\ensuremath{\Theta\paren{#1}}}
\newcommand{\bigOmega}[1]{\ensuremath{\Omega\paren{#1}}}

\newcommand{\bigOTilde}[1]{\ensuremath{\tilde{O}\paren{#1}}}

% fundamental sets and objects
\newcommand{\NonNegativeReals}{\ensuremath{\mathbb{R}_{\ge 0}}}
\newcommand{\PositiveIntegers}{\ensuremath{\mathbb{Z^+}}}
\newcommand{\integers}{\ensuremath{\mathbb{Z}}}
\newcommand{\nats}{\ensuremath{\mathbb{N}}}
\newcommand{\reals}{\ensuremath{\mathbb{R}}}
\newcommand{\rationals}{\ensuremath{\mathbb{Q}}}
\newcommand{\distrib}[0]{\ensuremath{\mathcal{D}}}
\newcommand{\matroid}{\ensuremath{\mathcal{M}}}

\newboolean{showcomments}
\setboolean{showcomments}{true}

\newcommand{\abTar}{L^1}
\newcommand{\normalTarCdf}{F^0}
\newcommand{\normalTarQtl}{Q^0}
\newcommand{\normalTarCdfEmp}{F^0_n}
\newcommand{\abnTarCdf}{F^a}
\newcommand{\abnTarCdfEmp}{F^a_n}

\newcommand{\cuparrow}{\color{ForestGreen}{{\boldsymbol{\uparrow}}}}
\newcommand{\cdownarrow}{\color{Maroon}{{\boldsymbol{\downarrow}}}}
\newcommand{\keep}{\color{White}{{\boldsymbol{\downarrow}}}}

\maketitle

\begin{abstract}
% \todo[]{Needs to be updated!}
Constructing decision trees online is a classical machine learning problem. Existing works often assume that features are readily available for each incoming data point. However, in many real world applications, both feature values and the labels are unknown \textit{a priori} and can only be obtained at a cost. %are costly to query. 
For example, in medical diagnosis, doctors have to choose which tests to perform (\textit{i.e.}, making costly feature queries) on a patient in order to make a diagnosis decision (\textit{i.e.}, predicting labels). We provide a fresh perspective to tackle this practical challenge. 
%In this paper we formally state this problem in the context of online decision tree learning, and provide a novel framework for addressing it.
Our framework consists of an active planning oracle embedded in an online learning scheme for which we investigate several information acquisition functions.
% including a surrogate function based on adaptive submodularity. 
Specifically, we employ a surrogate information acquisition function based on adaptive submodularity
% \yuxin{are we committing to a single acq. function? IG/US/EC2 are all possibilities and there's no clear winner}
to actively query feature values with a minimal cost, while using a posterior sampling scheme to maintain a low regret for online prediction. We demonstrate the efficiency and effectiveness of our framework via extensive experiments on various real-world datasets. Our framework also naturally adapts to the challenging setting of online learning with concept drift
%enables a simple solution to the concept drift problem 
and is shown to be competitive  with baseline models while being more flexible. 
\end{abstract}

\section{Introduction}\label{sec:intro}
% \yeolnote{(To polish in a few hours)}

\yeol{
Decision trees constitute one of the most fundamental and crucial machine learning models, due to their interpretability and extensibility. An important variant developed for online setting has been employed in various impactful real-world applications such as medical diagnosis~\cite{podgorelec2002decision}, intrution detection~\cite{jiang2013incremental}, network troubleshooting~\cite{rozaki2015design}, etc.

% \arxiv{TODO: Add the online medical diagnosis example here with a figure. (Or should we use other troubleshooting example? Exploration can be fatal in medical diagnosis.)}

Classical models aim to construct online decision trees incrementally with streaming data. However, such models have several disadvantages. First, they require that all features are presented to determine splitting node (\cite{das2019learn,feraud2016random}). However, querying feature values can be costly in real-world scenarios, e.g., conducting medical tests for medical diagnosis can be quite expensive. Second, classical models are typically not \textit{fully} trained online, where labels are assumed to be known for each point in the data steam \cite{shim2018joint}. In contrast, our work takes feature acquisition cost (formally defined in \secref{sec:formulation})  %\footnote{%Here, cost refers to the feature query cost, which we will formally define in Section~\ref{sec:formulation}.}
into considerations and aim at the more challenging fully online case: we receive a data point at each step, and need to make accurate prediction of its label with low feature acquisition cost; the true label will only be observed \textit{after} we make the prediction.

Concretely, consider the medical diagnosis problem: at each round $t$, a patient $\vx ^t$ comes in, and the system is asked to predict the treatment $\vy ^t$ for the patient. Naturally, we take results of medical tests (e.g., a CT scan) as patients' features: assume that there exist $n$ medical tests, each patient $\vx ^t$ can be represented by $(x_1^t, x_2^t, \ldots, x_n^t)$, where $x_{i}^t$ is their $i$-th test results. To reiterate, the problem has two crucial charateristics, making the setting challenging yet more practical: (1) \textbf{cost of feature query}: we assume that features of a data point is initially unknown but can be acquired with a cost; (2) \textbf{(fully) online learning}: we assume that we only have prior belief on the data and have to refine it via online interaction with the data streams (i.e., patients) whose labels (i.e., treatments) are initially unknown in each round.

Our goal is to construct online decision trees efficiently. Specifically, we interpret the efficiency of our framework from two aspects: first, it requires less streaming data points (or time steps) to learn a well-performed decision tree, i.e., \textit{faster learning}; second, it incurs lower cost for the label prediction for each data point, i.e., \textit{cheaper prediction}. We refer our framework as \textbf{UFODT}, i.e., \textbf{U}tility of \textbf{F}eatures for \textbf{O}nline learning of \textbf{D}ecision \textbf{T}rees. As shown in Figure~\ref{fig:framework}, our framework can be interpreted as an active planning oracle nested within an online learning model.

\begin{figure}[t]
\centering
 \includegraphics[width=0.35\textwidth]{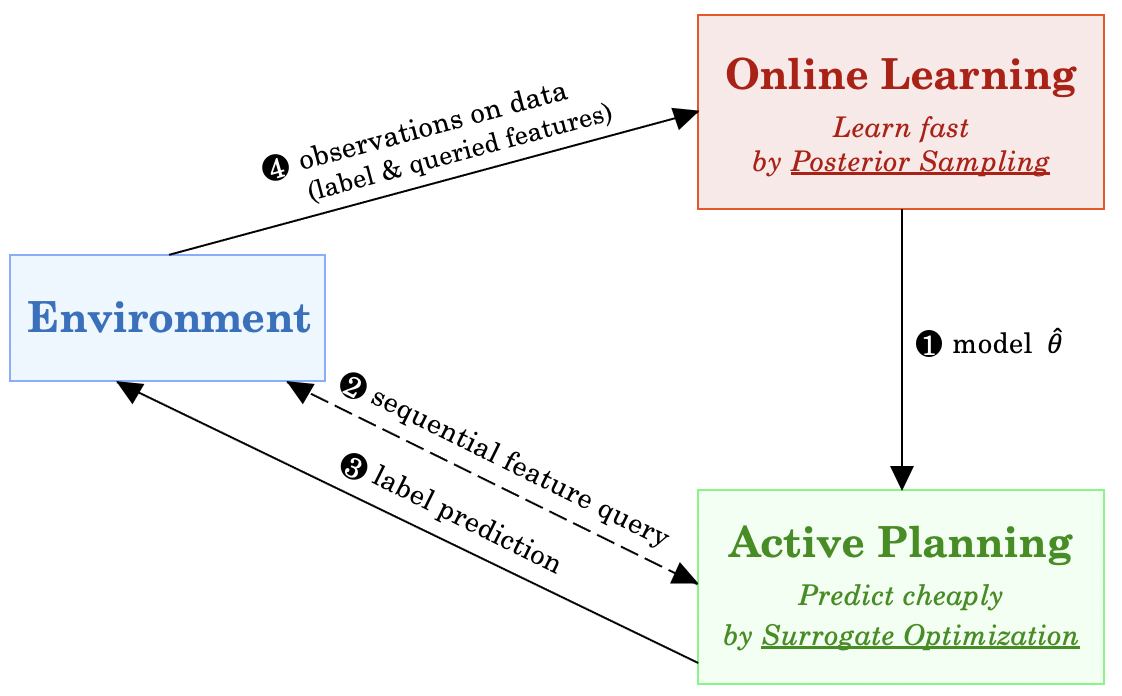}
\caption{An illustrative description for our proposed UFODT framework with an active planning oracle embedded in an online learning scheme. Details can be found in Algorithm~\ref{alg:onlinesketch}.}
\vspace{-0.20in}
\label{fig:framework}
\end{figure}

%The online learning part employs a posterior sampling scheme to learn the environment of the streaming data, which is guaranteed to have good performance~\cite{osband2017posterior};
For the online learning part (in the outer loop), we employ posterior sampling \cite{osband2017posterior} to learn the online decision tree model. In addition to the established regret guarantee in the canonical online learning setting, another advantage is that posterior sampling can effectively leverage data-dependent prior knowledge, which the classical online decision tree models often fail to capture.

The active planning oracle adopts a decision-theoretic design: we aim to optimize the \emph{utility of the features}, (informally) defined as the expected prediction error for the incoming data point in the data stream, should we observe the value of the chosen features. 
In order to \textit{efficiently} optimize the utility of features, we consider adaptive surrogate objective functions following the key insight of \cite{nipsnoisy2013} to sequentially query features and their values, which enables us to predict accurately with low cost. In particular, the sequential feature query based on our surrogate objective %of utility of features 
is a natural analogy to 
%the node splitting based on direct objective of information gain 
the information gain node splitting criterion in the classical decision tree literature.
% In order to \textit{efficiently} optimize the utility of features, we consider an adaptive submodular surrogate objective following the key insight of \cite{nipsnoisy2013} to sequentially query features and their values, which ensure us to predict accurately with (near-optimal) cost. In particular, the sequential feature query based on the surrogate objective of utility of features is a natural analogy to the node splitting based on direct objective of information gain in classical decision tree literature.

% \todo{Should we still go with the submodular surrogate objective? Or change it with a more general statement?}

}
\yeol{
Our contributions are summarized as follows:

\begin{itemize}\denselist
\item We introduce a novel setting of online decision tree learning where the learner does not have \textit{a priori} access to both the feature values and the labels, and propose an efficient algorithmic framework for constructing online decision trees in a cost-effective manner. 
\item Our framework consists of several novel algorithmic contributions, including a novel surrogate objective as node splitting criterion (section 4.2), an extension to efficiently handle real-valued feature values % for active feature acquisition 
(section 4.3), a variant to handle concept drift in streaming data (section 4.4), and an online feature selection scheme that further reduces the computational cost (appendix).%, enjoying superior flexibility over classical online decision tree learning algorithms \\
    % \item We present a novel framework for efficiently constructing online decision trees in a cost-effective manner. Moreover, our use of surrogate objective as node splitting criteria also brings new algorithmic contributions.\\
    % \item  We present an efficient algorithm that extends our online decision tree learning framework to general real-valued features.\\ 
    % \item We introduce a variant of the to handle concept drift in streaming data, enjoying superior flexibility over classical online decision tree learning algorithms.\\
    % \vspace{-0.08in}
    \item We provide a rigorous theoretical analysis and justification of our algorithm, % the theoretical regret of our framework 
    in terms of both a prior-independent and a prior-dependent regret bound.
    %\item We further propose an online feature selection scheme based on \cite{wang2013online} that reduces the computational cost of our framework and works well in practice (in the appendix). \yuxin{commented out for now}
    \item We perform extensive experiments on diverse real-world datasets to verify that our framework is able to achieve competitive (or even better) accuracy with much lower cost, compared to baseline models.
\end{itemize}
    
% \begin{itemize}
%     \noindent
%     i. We present a novel framework for efficiently constructing online decision trees in a cost-effective manner. Moreover, our use of surrogate objective as node splitting criteria also brings new algorithmic contributions.\\
%     iii. We present an efficient algorithm that extends our online decision tree learning framework to general real-valued features.\\ 
%     ii. We design a simple yet elegant variant to handle concept drift in streaming data, enjoying superior flexibility over classical online decision tree learning algorithms.\\
%     % \vspace{-0.08in}
%     iv. We elaborate the theoretical regret of our framework in terms of both prior-independent and prior-dependent regret. \\
%     v. We propose an online feature selection scheme based on \cite{wang2013online} that reduces the computational cost of our framework and works well in practice (in the appendix). \\
%     vi. We perform extensive experiments on diverse real-world datasets to verify that our framework is able to achieve competitive (or even better) accuracy with much lower cost, compared to baseline models.\\
%     % \todo[]{Maybe add the feature selection and exp3 parts.}
% \end{itemize}
}
% \yeol{
% Overall, the fully online setting and the considerations on feature query cost make it more practical to learn decision trees in the real world. We believe our unique perspective and flexible framework will enable new opportunities in online decision tree learning.}

% \yeolnote{Some comments should be put here to emphasize the unique perspective of our framework in the domain of online decision tree learning, and how this may lead to new opportunities for improving relevant models etc.}

\section{Related Work}\label{sec:lit}
% \yeolnote{Just put high-level idea here. Polish later.}
% \textbf{Online Decision Tree.}
% \yeol{
% %  \yeol{added some new papers; will summarize and compare contributions later.}
% \begin{itemize}
%     \item Incremental Decision Tree. \cite{domingos2000mining, hulten2001mining, basak2004online, son2015tracking, ma2016online, chen2016learning, rota2016online, das2019learn, jasinska2021online}
%     \item RL-based Online Decision Tree. \cite{degris2006learning, garlapati2015reinforcement, blake2018reinforcement, fan2021interactive}
% \end{itemize}
% }
% \yeol{High-level idea: Traditional online decision tree learning algorithm (1) considers streaming data as training data, while our work is "fully online" that it works in the scenario where the label for the streaming data is initially unknown, and we do not only need to learn data patterns (i.e., build a tree), but also need to predict accurately for each incoming data. (Here we may also refer to the online learning paper. Our work is actually more like sequential decision decision making inside sequential decision learning); (2) traditional online (DT) models use observes/queries all the feature for a data point to reach a prediction, however, in some realistic scenarios, query features is costly (e.g., medical diagnosis) or can waste computational power if certain features are informative enough. Our framework nicely handles this by \textit{iterative sampling framework with a plug-in adaptive greedy oracle} to actively collect most useful features for decision making.}
% \vspace{-3mm}
\begin{table*}[t]
\scriptsize
\center
\resizebox{\width}{\height}{%
\begin{tabular}{@{}ccc@{}}
\toprule
Algorithms                        & Learning setting                           & Cost of feature query   \\ \midrule
Online decision tree (e.g., \cite{das2019learn}) & Semi-online (labels received in foresight)          & No                        \\
Bandit tree (e.g., \cite{feraud2016random})                   & Online (no labels but rewards received)             & No     \vspace{-0.4mm}                         \\
Active feature acquisition (e.g., \cite{shim2018joint})    & Offline or semi-online (labels received in foresight)                    & Yes                        \\
UFODT (\textbf{Ours})                                & {\textbf{Online}} (labels received only in hindsight) & {\textbf{Yes}}                \\ \bottomrule

\end{tabular}
}
% \vspace{+1mm}
\caption{Comparison of our framework with existing ODT literature. We make the first concrete attempt on taking feature acquisition cost into online decision tree learning problems.}
\vspace{-5mm}
\label{table:comparison}
\end{table*}
\yeol{
\textbf{Online Decision Tree.}
Traditional models consider to build online decision tree (ODT) incrementally. \cite{domingos2000mining} first propose VFDT to learn decision tree from streaming data, and use Hoeffding bound to guarantee the model performance; VFDT later becomes the de facto baseline in this domain. \cite{hulten2001mining} propose a variant to handle concept drift, but the construction for the tree growing process is complicated to implement. \cite{manapragada2018extremely} design Hoeffding Anytime Tree as an improvement for VFDT. \cite{das2019learn} suggest a bootstrap strategy to enhance the memory efficiency of VFDT. It is important to note that all those models are not fully online, nor do they consider feature query cost. Another line of work considers applying reinforcement learning to build decision trees online \cite{garlapati2015reinforcement,blake2018reinforcement}. However, these works do not have any theoretical guarantees, nor do they utilize prior knowledge (e.g., on the underlying state transition distributions).
% \vspace{-0.10in}
}

% \yeolnote{Concept drift may be briefly reviewed here.}
\noindent
\textbf{Posterior sampling based online learning.} \yeol{Posterior sampling, also called Thompson sampling, is first proposed in \cite{thompson1933likelihood} to solve bandit problems in clinical trials, and the central idea is to select actions according to its posterior probability to be optimal. It later becomes an important policy in online learning problems, showing excellent performance empirically and theoretically. \cite{osband2013more,agrawal2017posterior,fan2021model} apply posterior sampling and prove its efficiency in reinforcement learning; this line of work is generally referred to as PSRL. Our work is closest to \cite{uaivoi2017}, which adapts PSRL to solve online information acquisition problems; however, in contrast to our work, \cite{uaivoi2017} %constrains itself within the application of interactive troubleshooting,
consider a more constrained application domain of interactive troubleshooting,
and fail to tackle concept drift which is often crucial in data-streaming scenario; in addition, it tackles the hypothesis space in more restricted ways. 
% \vspace{-0.10in}
}

\noindent
\textbf{Active feature acquisition.} \yeol{
The line of work on active feature acquisition (AFA) seeks to solve specific tasks like classification when data features are acquirable at a cost.  \cite{kapoor2009breaking} consider the restrictive setting where features and labels are boolean; \cite{bilgic2007voila} propose a decision-theoretic strategy with Bayesian networks to calculate the value of information of features; \cite{shim2018joint} suggest a joint framework to dynamically train the classifier while acquiring features. Conducting AFA online has not been discussed until recently, for example, \cite{beyer2020active} apply classical information acquisition techniques like \emph{information gain} to handle streaming data with missing features.
% \vspace{-0.10in} %\yuxin{verify if the following is correct: ``As we show later in \secref{sec:experiments}''}
}

\noindent
\textbf{Adaptive information acquisition for decision making.} As fundamentals of sequential decision making, the goal of those works is to design an adaptive policy to identify an unknown target (i.e., a decision) by sequentially picking tests and observing outcomes (i.e., acquiring information). There are well-known greedy heuristics for the adaptive policy such as Information Gain (IG)~\cite{dasgupta2005analysis} and Uncertainty Sampling (US) which greedily maximize the uncertainty reduction (over different random variables). Recently, researchers propose to optimize w.r.t. submodular surrogates, e.g., $\text{EC}^{2}$~\cite{nipsnoisy2013}, HEC~\cite{javdani2014near}, ECED~\cite{chen2017near}, which are proven to have near-optimal performance with low information acquisition cost. The above-mentioned policies naturally fit into our problem and can all be used %as a plug-in solver 
in the active planning phase. 
% \todo[]{Maybe rewrite this part with less focus on EC\textsuperscript{2}}

In Table \ref{table:comparison}, we provide a comparison of our work with existing ODT literature.

\section{Problem Formulation}\label{sec:formulation}
% =====================================================
\subsection{Efficient Online Decision Tree Learning}
% =====================================================
\label{subsec:tree}
% \begin{figure}[h!]
% \centering
%  \includegraphics[width=0.30\textwidth]{./Figures/naive}
% \caption{The Naïve Bayes model.}
% \vspace{-0.08in}
% \label{fig:naive}
% \end{figure}
\yeol{
Simply put, our task is to predict labels (classes) of streaming data points by building a decision tree online with low feature acquisition cost. At each epoch (time) $t$, we receive a data point $\vx ^t$, whose feature values and label are unknown. To make prediction for $\vx ^t$, we can gather information by querying feature values; each query incurs a cost. The label of $\vx ^t$ will only be revealed at the end of each epoch after we make the prediction.
%Importantly, we assume that the probabilitic relationship of feature values on labels is initially unknown, thus we must learn it through our online interactions.

Formally, let $\vx = (x_1, x_2, \ldots, x_n)$ be the \textit{data point} with $n$ features. Let $X_i \in \mathcal{X} \triangleq \{0, 1\}$ denotes the random variable for the \textit{feature value} of the $i$th feature\footnote{For simplicity, in this section we assume features are binary. Our setting is extended to multicagorical or continuous feature cases in Section~\ref{sec:continuous}, where more details can be found in appendix.}, and let $Y_j \in \mathcal{Y} \triangleq \{y_{1},  y_2, \ldots, y_{m}\}$ be the random variable for the \textit{label} of the data point. The superscript $t$ denotes that the data is received at epoch $t$. We adopt the common naïve Bayes assumption to model underlying probabilistic strucutre: $\mathbb{P}\left[Y_j, X_{1}, \ldots, X_{n}\right]=\mathbb{P}[Y_j] \prod_{i=1}^{n} \mathbb{P}\left[X_{i} \mid Y_j \right]$, i.e., features are conditionally independent given the class. Since we are in the online setting, we assume that the \textit{joint distribution} $\mathbb{P}\left[Y, X_{1}, \ldots, X_{n}\right]$ is initially unknown (though we may have prior knowledge on that) and needs to be learned via our online interactions.

We define $\theta _{ij} \triangleq \mathbb{P}\left[X_{i} = 1\mid Y_j \right]$, and assume that $\theta$ is follows a Beta distribution, $\mathrm{Beta}(\alpha _{ij}, \beta _{ij})$. We use $\boldtheta=\left[\theta_{i j}\right]_{n \times m}$ to denote the probabilistic table for the data distribution and assume $\boldtheta \sim \mathrm{Beta}(\boldalpha, \boldbeta)$.
Under the above probabilistic model, each query on a feature value will provide some information about $Y$. We define the set of \textit{queries} as $\gQ \triangleq \{1, 2, \ldots, n \}$, and a query $i \in \mathcal{Q}$ will reveal the value of the $i$th feature. We define \textit{cost} of the feature query as $c: \mathcal{Q} \rightarrow \mathbb{R}_{\geq 0}$. %For simplicity, we assume $c=1$ for each query.
Upon information gathered from feature query, we make a prediction. We define the loss of our prediction as $l: \mathcal{Y} \times \mathcal{Y} \rightarrow \mathbb{R}_{+}$. Our goal is to reach low prediction loss with low query cost on data stream.

We additionally define $H=\left[X_{1}, \ldots, X_{n}\right]$ as the random variable for the \textit{hypothesis} of a data point. Thus, each hypothesis $h$ corresponds to a full realization of the outcome of all queries in $\mathcal{Q}$. Let $h \in \mathcal{H} \triangleq \{0, 1\}^{n}$. Importantly, the set $\mathcal{H}$ can be partitioned into $m$ disjoint \textit{decision regions}, that each class in $\mathcal{Y}$ corresponds to a decision region. Later, we may use the term ``decision region'' to implicitly refer ``label'' or ``class''.
}

% Assume we have a collection datapoints $d_1, d_2, \dots, d_K$. Each datapoint $d_k$ is represented with $n$ binary features shown by $x_{k1}, \dots, x_{kn}$. Moreover, each $d_k$ belongs to a specific class $y_j \in \mathcal{Y}={y_1, y_2, \dots, y_m}$.

Under this construction, our goal then becomes building a decision tree which identifies the \textit{decision region} for each data point arriving to us. Such identification of decision region should be done with low \emph{cost}. \yeol{This enables a decision-theoretic perspective as follows.}

% Let $X_1, X_2, \dots, X_n$ be the random variables representing the $n$ binary features and $Y$ be a random variable representing the decision regions. We assume that the features are independent when the decision region is known that is $P(X_i, X_j|Y=y) = P(X_i|Y=y) \times P(X_j|Y=y)$ for all $i,j \in \{1,2, \dots, n\}$ and $y \in \mathcal{Y}$. Moreover, Given the decision region each binary feature is distributed by a Bernoulli distribution. Specifically, conditioned on the event that $Y=y_j$, $X_i$ is distributed by Bernoulli distribution with parameter $\theta_{ij}$ for all $i \in \{1,2, \dots, n\}$ and $j \in \{1,2, \dots, m\}$. So we have a matrix $\Theta = \{\theta_{ij}\}$ which contains all parameters for the conditional distributions. We also assume that each $\theta_{ij}$ is distributed by a Beta distribution with parameters $\alpha_{ij}$ and $\beta_{ij}$ ($Beta(\alpha_{ij},\beta_{ij})$). There is also some prior knowledge about our decision regions specified by a prior distribution over the random variable $Y$. We show the prior distribution by $P(Y)$.

% \todo[inline]{How should we define our goal?}
% Our goal here is to build a decision tree classifier that identifies the decision region for each datapoint arriving to us. The identification of the decision region should be done with a low \emph{cost}.

% =====================================================
\subsection{Utility of Features}
% =====================================================
\yeol{
At each epoch, we perform a set of queris $\mathcal{F} \subseteq \mathcal{Q}$, and let the outcome vector be $\vx _{\calF}$, which can be conceived as a partial realization for the hypothesis $h$ of $\vx$.

Let $y$ be our label prediction, and denote its associated loss w.r.t. the true data label $y_ \text{true}$ as $l$. We can then naturally define the \emph{utility of $y$} as $u \triangleq -l$ and the \emph{conditional expected utility} of $y$ upon observing $\vx _\calF$ as $U\left(y \mid \vx _{\calF} \right) \triangleq \mathbb{E}_{y_{\text{true}}}\left[u(y_{\text{true}}, y) \mid \vx _{\calF}\right]$. Note that we could define the utility similarly upon $h$, since $h$ is the full realization of features.

% \begin{definition}\label{thm:uof}
%     \textit{(Utility of Features $\vx _{\calF}$)}
%     \begin{align}
%         \sU \left(\vx_{\mathcal{F}}\right)
%         &\triangleq \max _{y \in \mathcal{Y}} \ U\left(y \mid \vx _{\calF} \right) \nonumber \\
%         &= \max _{y \in \mathcal{Y}} \ \mathbb{E}_{y_{\operatorname{true}}}\left[u(y_{\operatorname{true}}, y) \mid \vx_{\mathcal{F}}\right]. \nonumber
%     \end{align}
% \end{definition}

\begin{definition}\label{thm:uof}
    \textit{(Utility of features $\vx _{\calF}$)}
    \begin{align}
        \sU \left(\vx_{\mathcal{F}}\right) &\triangleq \max _{y \in \mathcal{Y}} \ U\left(y \mid \vx _{\calF} \right). \nonumber
    \end{align}
\end{definition}

Here, $\sU (\vx _{\calF})$ represents the maximal expected utility achievable given $\vx _{\calF}$. This formulation is similar to the value of information~\cite{howard1966information}, and can connect with the generalization error of the classical empirical risk minimization framework~\cite{vapnik1992principles}, however, what we would like to emphasize here is that such utility relies on the \emph{partial realization of features}, that we seek to find the cheapest query set $\calF$ to achieve the maximal utility.

% From this perspective, our goal becomes querying the most useful features to achieve highest utility (with low query cost).
%
% As described in Section~\ref{subsec:tree}, this objective

We then define the decision region for $y$ as the set of hypotheses for which $y$ is the optimal label prediction:
\begin{definition}\label{thm:region}
    \textit{(Decision region for $y$)}
    \begin{align}
        \calR _{y} \triangleq\{h: U(y \mid h)=\sU (h)\}. \nonumber
    \end{align}
\end{definition}

Directly optimizing $\sU (\vx _{\calF})$ is usually intractable, and greedy heuristics may fail or be costly. 
% \todo[]{Should we keep this?} 
In the next, we will show how we may use a \emph{surrogate objective} of $\sU (\vx _{\calF})$ by the notion of decision regions to achieve near optimal query planning.
}

\section{Proposed Framework}\label{sec:algorithm}
% \arxiv{TODO: Add a figure for EC2, fix the input-output issue for algorithms.}
% \arxiv{TODO: Potential algorithm improvement to make it more practical in real-world deployment: (1) batch update for TS, (2) offline causal inference during online update cycle.}

\yeol{
We now present our UFODT framework for efficient online decision tree learning. The high-level structure is presented in Figure~\ref{fig:framework}, which can be conceived as an \textit{active planning oracle} nested within on \textit{online learning model}. We use the posterior sampling strategy for the online learning model, and a surrogate optimization algorithm on utility of features for the active planning oracle.

% Assume that we have access to the prior of the environment parameter $\boldtheta$. Firstly, at the beginning of each epoch $t$, we sample $\boldtheta _{t}$ from the (posterior) distribution, i.e., $\boldtheta ^{t} \sim \mathrm{Beta}(\boldalpha ^{t}, \boldbeta ^{t})$. Then, we run a adaptive policy which sequentially queries features (i.e., splitting nodes) of $\vx ^{t}$, in order to optimize some objectives (e.g., information gain, utility of features, etc.); importantly, such a policy can be conceived as an offline oracle, as its planning is fixed upon each sampled $\boldtheta _{t}$. The policy will suggest a label prediction for $\vx ^{t}$. Finally, the true label for $\vx ^{t}$ will be revealed, and will be used to update the posterior of $\boldtheta$ together with the query observation $\vx _{\calF}$.
}

% We model the problem defined in section \ref{sec:formulation} with a graph. For each datapoint, we create a graph whose nodes are the set of hypotheses. We have an edge between nodes $h_1$ and $h_2$ if the corresponding hypotheses are not in the same decision region (or class). Since we want to identify the decision region for the datapoint, after each query we can cut some of the edges that are inconsistent with the result of our query until only one decision region is left.
% \todo[inline]{Add some details of $EC^2$?}
% This relates our problem to the $EC^2$ algorithm introduced in \cite{nipsnoisy2013} ...

% In what follows we consider two different settings for the problem.

% ================================================
\subsection{Online Learning by Posterior Sampling}
% ================================================
\label{sec:online}
% Now we consider the case in which we only have some prior information about the decision regions and the parameters for our conditional distributions. This information is available through prior distributions $P(Y)$ and $P(\theta_{ij})$ for all $i \in \{1, 2, \dots, n\}$ and $j \in \{1, 2, \dots, m\}$. So after receiving each data point we aim at updating our information about the parameters. This could be done via Posterior Sampling in a way similar to \cite{uaivoi2017}. Here the posterior sampling is used to sample the parameters of our conditional distributions that is $\theta_{ij}$'s. The algorithm we use for this setting is sketched in Algorithm \ref{alg:onlinesketch}. We use a matrix called $Param$ to maintain the parameters for conditional distributions. Specifically, this matrix has two entries for each parameter $\theta_{ij}$. The first one ($Param_{i,j,0}$) contains $\alpha_{ij}$ and the second one ($Param_{i,j,1}$) contains $\beta_{ij}$.

\yeol{
Assume that we have access to the prior of the environment parameter $\boldtheta$. Firstly, at the beginning of each epoch $t$, we sample $\boldtheta ^{t}$ from the (posterior) distribution of $\boldtheta$. Then, we run a adaptive policy which sequentially queries features (i.e., splitting nodes) of $\vx ^{t}$, in order to optimize some objectives (e.g., information gain, utility of features, etc.); importantly, such a policy can be conceived as an offline oracle, as its planning is fixed upon each sampled $\boldtheta ^{t}$. The policy will suggest a label prediction for $\vx ^{t}$. Finally, the true label for $\vx ^{t}$ is revealed, and is then used to update the posterior of $\boldtheta$ together with the query observation $\vx _{\calF}$. The pseudo-code is provided in \algref{alg:onlinesketch}.
}
\noindent
\scalebox{0.85}{
\begin{minipage}{1.15\columnwidth}
\begin{algorithm}[H]
  \caption{Online Decision Tree Learning}
  \label{alg:onlinesketch}
  \hspace*{\algorithmicindent} \textbf{Input}: Prior $\sP (Y)$ and $\sP (\boldtheta)$. 
  \vspace{+0.05in}
  \begin{algorithmic}[1]
        % \State Initialize the parameters matrix ($Param$) for conditional distributions.
        \For{$t=1, 2, \ldots, T$}
        \State Sample $\boldtheta ^{t} \sim \mathrm{Beta}(\boldalpha ^{t-1}, \boldbeta ^{t-1})$ and receive $\vx ^t$;
        \State Call Algorithm~\ref{alg:offlinesketch} with $\boldtheta _t$ to sequentially query features and predict the label (online);
        \State Observe $\vx ^t _{\calF}$ and true label $y_j ^{t}$;
        \State Call Algorithm~\ref{alg:stationary-ts} to obtain $\mathrm{Beta}(\boldalpha ^{t}, \boldbeta ^{t})$
        \EndFor

  \end{algorithmic}
\end{algorithm}
\end{minipage}
}

\noindent
\scalebox{0.85}{
\begin{minipage}{1.15\columnwidth}
\begin{algorithm}[H]
   \hspace*{\algorithmicindent} \textbf{Input}: $\vx ^t _{\calF}$; $y_{j} ^{t}$; $(\boldalpha ^{t-1}, \boldbeta ^{t-1})$. \vspace{+0.05in}
    \caption{Posterior Update}
    \label{alg:stationary-ts}
    \begin{algorithmic}[1]
          \For{each $(i, x _{i}) \in \vx _{\calF} ^t$}
          \If{$x_i = 1$}{
          $\alpha _{ij} ^{t}$ $\gets$ $\alpha _{ij} ^{t-1} + 1$

          }
          \Else
          {
          $\beta _{ij} ^{t}$ $\gets$ $\beta _{ij} ^{t-1} + 1$
          }
          \EndIf
          \EndFor
          \State $\textrm{Return $(\boldalpha ^{t}, \boldbeta ^{t})$}$
    \end{algorithmic}
\end{algorithm}
\end{minipage}
}

\subsection{Planning by Surrogate Optimization}
% ================================================
\label{sec:offline}
\noindent
\scalebox{0.85}{
\begin{minipage}{1.15\columnwidth}
\begin{algorithm}[H]
  \caption{Planning by Surrogate Optimization}
  \label{alg:offlinesketch}
  \hspace*{\algorithmicindent} \textbf{Input}: Prior $\sP (Y)$ and $\boldtheta$. \vspace{+0.05in}
\begin{algorithmic}[1]
        \State Sample hypotheses by calling Algorithm \ref{alg:sampling}
        \State $\calO=\emptyset$
        \While{stopping condition for EC$^2$ not reached}
            \State $\textrm{Use } \text{EC}^2 \textrm{ to determine next feature $i \in \gQ$ }$
            \State $\textrm{Query feature $i$}$
            \State $\textrm{Add $(i, x_i)$ to $\calO$}$
            \State $\textrm{Update $\sP(h \mid \calO)$ based on $\sP (Y)$ and $\boldtheta$}$
      \EndWhile
      \State $\textrm{Return the decision region $y$}$
\end{algorithmic}
\end{algorithm}
\end{minipage}
}\\

In the planning phase, we seek to optimize an objective of $\sU (\vx _{\calF})$ given the sampled environment. We propose to optimize for the \emph{surrogate objective} of $\sU (\vx _{\calF})$. Specifically, we focus on the EC$^{2}$ algorithm~\cite{nipsnoisy2013}, which uses the equivalence class edge cut as the surrogate objective of $\sU (\vx _{\calF})$.
% \todo[]{Should we keep this?}
Importantly, this surrogate objective function is adaptive submodular, and hence a greedy algorithm could attain a near optimal solution, allowing us to make accurate prediction with low query and computational cost. 
% Below we present the general statement of EC$^{2}$.
% In the planning phase, we seek to optimize an objective of $\sU (\vx _{\calF})$ given the sampled environment. As mentioned earlier, traditional works usually use greedy heuristics (e.g., information gain) to optimize for that, which may fail arbitrarily badly \todo[]{Should we keep this?}. We instead propose to optimize for the \emph{surrogate objective} of $\sU (\vx _{\calF})$. Specifically, we adopt the EC$^{2}$ algorithm~\cite{nipsnoisy2013}, which uses the equivalence class edge cut as the surrogate objective of $\sU (\vx _{\calF})$. Importantly, this surrogate objective can be proved to be adaptive submodular and near optimal, allowing us to make accurate prediction with low cost. Below we present the general statement of EC$^{2}$.
%where the description is adapted from \cite{chen2017near, nipsnoisy2013}.
% \yeolnote{$\mathbb{P}(h)$ seems not be formally defined yet.}

In EC$^{2}$, we define a weighted graph $G=(\mathcal{H}, E)$, where $E \triangleq \bigcup_{y \neq y^{\prime}}\left\{\left\{h, h^{\prime}\right\}: h \in \mathcal{R}_{y}, h^{\prime} \in \mathcal{R}_{y^{\prime}}\right\}$, denoting the pairs of hypotheses with different labels. The weight of each edge is $w\left(\left\{h, h^{\prime}\right\}\right) \triangleq \mathbb{P}(h) \cdot \mathbb{P}\left(h^{\prime}\right)$. Specifically, $\mathbb{P}(h)$ can be conceived as posterior distribution upon query of existing feature values. We define the weight of a set of edges as
$w\left(E^{\prime}\right) \triangleq \sum_{\left\{h, h^{\prime}\right\} \in E^{\prime}} w\left(\left\{h, h^{\prime}\right\}\right)$. Therefore, performing a \textit{feature query} is considered as \textit{cutting an edge}, which can also be conceived as removing inconsistent hypotheses with all their associated edges. We thus have the edge set $E\left(x_{i}\right)$ cut after observing the outcome of a feature query $x_i$: $E\left(x_{i}\right) \triangleq\left\{\left\{h, h^{\prime}\right\} \in E: \mathbb{P}\left[x_{i} \mid h\right]=0 \vee \mathbb{P}\left[x_{i} \mid h^{\prime}\right]=0\right\}$.
Based on the graph $G$, we formally define the EC$^2$ objective as $f_{E C^{2}}\left(\mathbf{x}_{\mathcal{F}}\right) \triangleq w\left(\bigcup_{v \in \mathcal{F}} E\left(x_{v}\right)\right), \nonumber$
and the score of feature query is defined as 
% \todo[]{Needs an expectation over hypotheses}
$
\Delta_{E C^{2}}\left(u \mid \mathbf{x}_{\mathcal{F}}\right) \triangleq \mathbb{E}_{x_u|\mathbf{x}_{\mathcal{F}}}\left[f_{E C^{2}}\left(\mathbf{x}_{\mathcal{F} \cup \{u\}}\right) - f_{E C^{2}}\left(\mathbf{x}_{\mathcal{F}}\right)\right]. \nonumber
$
% We now formally define the EC$^2$ objective as:
% \begin{align}
%     f_{E C^{2}}\left(\mathbf{x}_{\mathcal{F}}\right) \triangleq w\left(\bigcup_{v \in \mathcal{F}} E\left(x_{v}\right)\right), \nonumber
% \end{align}
% and the score of feature query is defined as:
% % \todo[]{Needs an expectation over hypotheses}
% \begin{align}
% \Delta_{E C^{2}}\left(u \mid \mathbf{x}_{\mathcal{F}}\right) \triangleq \mathbb{E}_{x_u|\mathbf{x}_{\mathcal{F}}}\left[f_{E C^{2}}\left(\mathbf{x}_{\mathcal{F} \cup \{u\}}\right) - f_{E C^{2}}\left(\mathbf{x}_{\mathcal{F}}\right)\right]. \nonumber
% \end{align} 
% \begin{align}
% \Delta_{E C^{2}}\left(u \mid \mathbf{x}_{\mathcal{F}}\right) \triangleq \mathbb{E}_{x_u|\mathbf{x}_{\mathcal{F}}}\left[ w\left(\bigcup_{v \in \mathcal{F} \cup u} E\left(x_{v}\right)\right)-w\left(\bigcup_{v \in \mathcal{F}} E\left(x_{v}\right)\right)\right]. \nonumber
% \end{align} 
The policy $\pi _{\text{EC}^{2}}$ will greedily query the feature which maximizes the gain cost ratio $\Delta_{E C^{2}}\left(v \mid \mathbf{x}_{\mathcal{F}}\right) / c(v)$ and stops when only one decision region exists. We present the algorithm in Algorithm \ref{alg:offlinesketch}. Note that the EC$^2$ objective in the line 3 and line 4 can be flexibly replaced by other active information acquisition functions like Information Gain (IG) and Uncertainty Sampling (US), which we elaborate in the appendix.

% \vspace{-5mm}
\paragraph{Hypothesis Sampling Procedure.}
% \vspace{-0.05in}
% ================================================
% \label{sec:sampling}
% \begin{wrapfigure}{r}{0.45\textwidth}
% \begin{minipage}{0.45\textwidth}
% \begin{algorithm}[H]
%   \caption{Hypotheses Sampling}
%   \label{alg:sampling}
%   \hspace*{\algorithmicindent} \textbf{Input}: Prior $\sP (Y)$ and $\sP (\boldtheta)$. \vspace{+0.05in}
%   \begin{algorithmic}[1]
%         \State $\tilde{\calH}$ $\gets$ $ \emptyset$
%         \State \textrm{Sample decision regions from $\sP (Y)$}
%         \For{each sampled decision region $j$}
%         \State $h$ $\gets$ $ \emptyset$
%             \For{each $i \in \gQ$}
%                 \State \textrm{Sample $X_i \sim $ Ber($\theta_{ij}$) and add to $h$}
%                 % \State \textrm{Add the sample to $h$}
%             \EndFor
%             \State $\tilde{\calH} = \tilde{\calH} \cup h$
%         \EndFor
%         \State $\textrm{Return $\tilde{\calH}$}$
%  \end{algorithmic}
% \end{algorithm}
%  \vspace{-2cm}
% \end{minipage}
% \end{wrapfigure}
Information acquisition methods such as Information Gain and Uncertainty Sampling, and also EC$^2$ require enumeration of hypothesis space, which can be computationally challenging. 
Thereby, to reduce the number of hypotheses, we use a sampling procedure sketched in Algorithm \ref{alg:sampling}. In this algorithm we first sample a decision region using the prior distribution over the classes and then we exploit the current estimate of $\boldtheta$ to build a new sample.
\noindent
\scalebox{0.85}{
\begin{minipage}{1.15\columnwidth}
\begin{algorithm}[H]
  \caption{Hypotheses Sampling}
  \label{alg:sampling}
  \hspace*{\algorithmicindent} \textbf{Input}: Prior $\sP (Y)$ and $\boldtheta$. \vspace{+0.05in}
  \begin{algorithmic}[1]
        \State $\tilde{\calH}$ $\gets$ $ \emptyset$
        \State \textrm{Sample decision regions from $\sP (Y)$}
        \For{each sampled decision region $j$}
        \State $h$ $\gets$ $ \emptyset$
            \For{each $i \in \gQ$}
                \State \textrm{Sample $X_i \sim $ Ber($\theta_{ij}$) and add to $h$}
                % \State \textrm{Add the sample to $h$}
            \EndFor
            \State $\tilde{\calH} = \tilde{\calH} \cup h$
        \EndFor
        \State $\textrm{Return $\tilde{\calH}$}$
 \end{algorithmic}
\end{algorithm}
\end{minipage}
}
 % \yeolnote{Should we have a separate section for theoretical guarantee, or just put it under each algo description?} \yuxin{Separate is good}
% ================================================
% \vspace{-1mm}
% %%%%%%%%%%%%%%%%%%%%%%%%%%%%%%%%%%%%%%%%%%%%%%%
\noindent
\scalebox{0.85}{
\begin{minipage}{1.15\columnwidth}
\begin{algorithm}[H]
\caption{Threshold selection}\label{alg:exp3_threshold_selection}
\hspace*{\algorithmicindent}\textbf{Input}: $\eta$
\begin{algorithmic}[1]
\State $S_{ik}^{(0)} \gets 0$ for all features $i$ and thresholds $k$
\For{$t=1, 2, \ldots, T$}
\For{each feature $i$}
\If{feature $i$ can be queried}
    \State Calculate the threshold sampling distribution:
    \begin{center}
        $\Pi_{ti}(k) = \frac{\exp(\eta S_{ik}^{(t-1)})}{\sum_{k'}\exp(\eta S_{ik'}^{(t-1)})}$
    \end{center}
    \State Sample threshold $B_{ti} \stackrel{}{\sim} \Pi_{ti}$ and observe gain $\Delta_{ti}$
    % \State Observe gain $\Delta_{ti}$
    \State Calculate  $S_{ik}^{(t)}$: 
    \begin{center}
        $S_{ik}^{(t)} = S_{ik}^{(t-1)} + \frac{\mathbbm{1}\{B_{ti}=k\}\Delta_{ti}}{\Pi_{ti}(k)}, \text{ for all } k$
    \end{center}
\EndIf
\EndFor
\EndFor
\end{algorithmic}
\end{algorithm}
\end{minipage}
}
\subsection{Handling Continuous Features}
\label{sec:continuous}

% \todo[]{Morteza: I suggest we do not talk about the Gaussian dist. solution, as the reviewers may ask why we did not try it}

% One natural extension of our framework for handling continuous features is to consider Gaussian distributions for both features and parameters.

% Specifically, we may assume that, conditioned on decision region $Y_j$, each random variable $X_i$ has a Gaussian distribution with mean $\mu_{ij}$ and standard deviation $\sigma_{ij}$. Both $\mu_{ij}$ and $\sigma_{ij}$ are then Gaussian distributed.
One way to extend our framework for handling continuous data is to ``binarize" real-valued features.
% However, there is a simpler way to extend the framework to continuous data, that is ``binarization" of real-valued features. 
In particular, for each feature we consider $K$ different thresholds for binarization, and in each training time step, we select the threshold that maximizes the gain based on the information acquisition function (e.g., $\Delta_{E C^{2}}$). By collecting the history for each threshold, we can easily calculate the posterior distribution of the parameters associated to the binary feature corresponding to that threshold. We provide the details in the Appendix \ref{sec:continuous_featurs_details}. This naive way of \emph{exhaustively} searching for the best threshold causes a significant computational running time in each time step. Thereby, we propose a more efficient algorithm for \emph{learning} the best discretization for each feature. 

\paragraph{Learning discretizations for continous features.} We model the threshold selection process for each feature as an \emph{adversarial bandit problem} \cite{auer2002nonstochastic} with arms and rewards being the thresholds and gains, respectively. Let $\Pi_{ti}: [K] \rightarrow \mathbb{R}_{\geq 0}$ ($\sum_{k \in [K]} \Pi_{ti}(k)=1$) be the probability distribution according to which we select the binarization threshold for feature $i$ at time step $t$. Then, we do threshold selection and update $\Pi_{ti}$ with the procedure sketched in Algorithm \ref{alg:exp3_threshold_selection} (adapted from the Exp3 algorithm \cite{auer2002nonstochastic}). $S_{ik}^{(t)}$ is the sum of estimated gains for the $k$-th threshold  of the  $i$-th feature until time $t$. In time $t$, we use a threshold sampled from $\Pi_{ti}$ and observe the gain for that threshold. Then we calculate $S_{ik}^{(t)}$ based on $S_{ik}^{(t-1)}$ and the observed gain. Specifically, for each feature, we add unbiased estimates of gains ($\frac{\mathbbm{1}\{B_{ti}=k\}\Delta_{ti}}{\Pi_{ti}(k)}$) for different thresholds to the previous sum of gains.

% \todo[]{Morteza: Then...? Mention the update. Also, in the Algorithm, in 'Calculate $S_...$' which k is used? sampled one? If so, make it explicit.}
% %%%%%%%%%%%%%%%%%%%%%%%%%%%%%%%%%%%%%%%%%%%%%%%
 \subsection{Handling Concept Drift}
%  \vspace{-0.05in}
% ================================================
\label{sec:drift}
% \begin{algorithm}[H]
%   \hspace*{\algorithmicindent} \textbf{Input}: $\vx ^t _{\calF}$; $y_{j} ^{t}$; $(\boldalpha ^{t-1}, \boldbeta ^{t-1})$; $\gamma$; $(\bar{\boldalpha}, \bar{\boldbeta})$. \vspace{+0.05in}
%   \caption{Handling Concept Drift}
%   \label{alg:nonstationary-ts}
%   \begin{algorithmic}[1]
%         \State $\boldalpha ^{t} \gets (1 - \gamma) \boldalpha ^{t-1} + \gamma \bar{\boldalpha}$
%         \State $\boldbeta ^{t} \gets (1 - \gamma) \boldbeta ^{t-1} + \gamma \bar{\boldbeta}$
%         \For{each $(i, x _{i}) \in \vx _{\calF} ^t$}
%             \If{$x_i = 1$} \State $\alpha _{ij} ^{t} \gets \alpha _{ij} ^{t-1} + 1$
%             \Else
%                 \State $\beta _{ij} ^{t} \gets \beta _{ij} ^{t-1} + 1$
%             \EndIf
%         \EndFor
%         \State $\textrm{Return $(\boldalpha ^{t}, \boldbeta ^{t})$}$
%  \end{algorithmic}
%  \end{algorithm}
\yeol{Concept drift is a crucial problem in streaming scenarios, where the dependency of features on the data label is changing over time. Classical ODTs use complicated updating criteria to handle concept drift~\cite{hulten2001mining}. Thanks to our posterior sampling scheme, we are able to adopt an exceptionally easy solution to tackle the concept drift problem, by simply adding two lines of code upon Algorithm~\ref{alg:stationary-ts}, which is shown in Algorithm~\ref{alg:nonstationary-ts}.} \yeol{This inspiration comes from non-stationary posterior sampling~\cite{russo2017tutorial}. The central idea is that we need to keep exploring in order to learn the time-varying concept. This technique encourages exploration by adding a discount parameter $\gamma$ for the history, and injecting a random distribution $\mathrm{Beta}(\bar{\boldalpha}, \bar{\boldbeta})$ to increase uncertainty.}

\noindent
\scalebox{0.85}{
\begin{minipage}{1.15\columnwidth}
\begin{algorithm}[H]
  \hspace*{\algorithmicindent} \textbf{Input}: $\vx ^t _{\calF}$; $y_{j} ^{t}$; $(\boldalpha ^{t-1}, \boldbeta ^{t-1})$; $\gamma$; $(\bar{\boldalpha}, \bar{\boldbeta})$. \vspace{+0.05in}
  \caption{Handling Concept Drift}
  \label{alg:nonstationary-ts}
  \begin{algorithmic}[1]
        \State $\boldalpha ^{t} \gets (1 - \gamma) \boldalpha ^{t-1} + \gamma \bar{\boldalpha}$
        \State $\boldbeta ^{t} \gets (1 - \gamma) \boldbeta ^{t-1} + \gamma \bar{\boldbeta}$
        \For{each $(i, x _{i}) \in \vx _{\calF} ^t$}
            \If{$x_i = 1$} \State $\alpha _{ij} ^{t} \gets \alpha _{ij} ^{t-1} + 1$
            \Else
                \State $\beta _{ij} ^{t} \gets \beta _{ij} ^{t-1} + 1$
            \EndIf
        \EndFor
        \State $\textrm{Return $(\boldalpha ^{t}, \boldbeta ^{t})$}$
 \end{algorithmic}
\end{algorithm}
\end{minipage}
}
\section{Theoretical Analysis}
\label{bound-analysis}
In this section, we discuss the bound of the expected regret for our fully online framework (Sections \ref{sec:online} and \ref{sec:offline}). Here we focus on the EC\textsuperscript{2} objective function due to its theoretical guarantees. 
%by a direct adaptation of the results from \cite{osband2013more, uaivoi2017}.

Let $\sU (\pi) \triangleq \mathbb{E}_{h}\left[ \max _{y \in \mathcal{Y}} \mathbb{E}_{y _{\text{true}}}[u(y_{\text{true}}, y) \mid \calS (\pi, h)] \right]$ be the expected utility of features achieved by a policy $\pi$; here, $\calS (\pi, h)$ represents the set of features and their values queried by policy $\pi$ upon a hypothesis $h$.
% Assume that we are running UFODT-EC2.
As proved by \cite{golovin2011adaptive}, by the submodularity of the $\text{EC}^{2}$ objective function, $\pi ^{\text{EC}^{2}}$ is able to achieve the same utility as the optimal policy $\pi ^{*}$ does under a same environment $\boldtheta$, with at most $\left(2 \ln \left(1 / p_{\min }\right)+1\right) \cdot c_{\pi ^{*}}$ query cost, where $p_{\min}$ denotes the minimal probability of a hypothesis $h$ across environments and $c_{\pi ^{*}}$ represents the cost of the optimal policy.
%Simply put, $\text{EC}^{2}$ has the same prediction as the optimal policy, while having slightly higher cost.

\begin{definition}\label{immediat-regret}
     Let $\boldtheta ^{\star}$ denote the true environment, and let $\boldtheta ^{t}$ denote the sampled environment at epoch $t$ as in line 2 of Algorithm~\ref{alg:onlinesketch}. Let $\pi ^{*} _{\boldtheta ^{\star}}$ denote the optimal policy for $\boldtheta ^{\star}$, and $\pi ^{\operatorname{EC}^{2}} _{\boldtheta ^{t}}$ denote the policy with $\operatorname{EC}^{2}$ as in Section~\ref{sec:offline}. We define the immediate regret at epoch $t$ for Algorithm~\ref{alg:onlinesketch} with the $\operatorname{EC}^{2}$ objective as:
    \begin{align}
        \Delta ^{t} \triangleq \sU (\pi ^{*} _{\boldtheta ^{\star}}) - \sU (\pi ^{\operatorname{EC}^{2}} _{\boldtheta ^{t}}). \nonumber
    \end{align}
    We then define the total regret at epoch $T$ as:
    \begin{align}
        \text{Regret}(T) = \sum _{t=1} ^{T} \Delta ^{t}.  \nonumber
    \end{align}
\end{definition}

% =============================================
\subsection{Prior-Independent Regret Bound}\label{psrl-bound}
% =============================================
Based on the result of \cite{osband2013more}, we have the following regret bound for Algorithm~\ref{alg:onlinesketch}:

\begin{theorem}\label{exp-regret}
    (Prior-independent regret bound) Let $L=\left(2 \ln \left(1 / p_{\min }\right)+1\right) \cdot c_{\pi ^{*}}$ denote the worst-case cost (i.e., number of queries) for Algorithm~\ref{alg:onlinesketch} with the $\operatorname{EC}^{2}$ objective in any epoch, where $p_{\min}$ denotes the minimal probability of a hypothesis $h$ across environments and $c_{\pi ^{*}}$ represents the cost of the optimal policy. Let $S$ be the number of possible realizations of $L$ queries and $n$ be the total number of features. Assume that the sampling of decision regions by Algorithm~\ref{alg:sampling} is sufficient, such that all hypotheses with non-zero probability in the hypothesis space are enumerated. The expected total regret at epoch $T$ for Algorithm~\ref{alg:onlinesketch} with the $\operatorname{EC}^{2}$ objective is:
    \begin{align}
    \mathbb{E}[\operatorname{Regret}(T)]=O(L S \sqrt{n L T  \log (S n L T )}).\nonumber
    \end{align}
\end{theorem}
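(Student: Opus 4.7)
The plan is to cast each epoch of Algorithm~\ref{alg:onlinesketch} as an episode of a finite-horizon MDP and then invoke the PSRL regret bound of \cite{osband2013more} as a near black box. The MDP I would build has as state the current partial realization $\vx _\calF$ of queried feature values together with enough bookkeeping to track which decision regions remain consistent; by hypothesis this state takes at most $S$ values during any single epoch. Actions correspond to the $n$ candidate feature queries, the horizon is the worst-case EC$^{2}$ cost $L = (2\ln(1/p_{\min})+1)\cdot c_{\pi^{*}}$, transitions are governed by the true parameter $\boldtheta^{\star}$ (under the naïve Bayes factorization these are coordinates of $\boldtheta^{\star}$ restricted to the active decision region), and reward is collected only at the terminal state as the utility $U(y\mid\vx_\calF)$ of the chosen label. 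Over $T$ epochs this yields a total interaction length of at most $LT$.

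The next step is to close the gap between $\pi^{\text{EC}^{2}}_{\boldtheta^t}$ and the Bayes-optimal policy $\pi^{*}_{\boldtheta^t}$ for the sampled environment. Here the adaptive-submodularity guarantee of EC$^{2}$ from \cite{nipsnoisy2013} is essential: within budget $L$ the greedy EC$^{2}$ policy drives the weighted-edge-cut potential $f_{EC^{2}}$ to its maximum, which by construction of the graph $G$ means only a single decision region remains consistent with $\vx_\calF$, and therefore $\sU(\pi^{\text{EC}^{2}}_{\boldtheta^t}) = \sU(\pi^{*}_{\boldtheta^t})$. Substituting this identity into Definition~\ref{immediat-regret} turns the immediate regret into a purely Bayesian quantity $\Delta^t = \sU(\pi^{*}_{\boldtheta^{\star}}) - \sU(\pi^{*}_{\boldtheta^t})$, i.e., precisely what standard PSRL is designed to bound.

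With the reduction complete, I would apply the PSRL bound for episodic MDPs, which gives $\mathbb{E}[\operatorname{Regret}] = O(\tau |S|\sqrt{|A| N \log(|S|\, |A|\, N)})$ over $N$ total interaction steps with horizon $\tau$. Matching $\tau\to L$, $|S|\to S$, $|A|\to n$, and $N\to LT$ produces exactly $O(LS\sqrt{nLT\log(SnLT)})$. The mechanism underneath is the Thompson sampling identity: conditional on the history up to epoch $t$, $\boldtheta^t$ and $\boldtheta^{\star}$ are equidistributed, so the expected difference of optimal values telescopes into a sum of posterior-confidence radii that are controlled via standard concentration for the Beta posterior updates performed in Algorithm~\ref{alg:stationary-ts}.

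The main obstacle I anticipate is \emph{not} the concentration or telescoping inside the PSRL proof---those carry over verbatim---but rather verifying that the abstract active-planning problem actually embeds into the PSRL framework. Three points require care: (i) the state space is implicit and one must argue that only $O(S)$ distinct partial-realization states are reachable under $\pi^{\text{EC}^{2}}$ within $L$ queries; (ii) the transition model of the induced MDP has to be a measurable function of $\boldtheta$ so that sampling $\boldtheta^t$ from the Beta posterior is equivalent to sampling an MDP from the PSRL prior; and (iii) the hypothesis-sampling hypothesis of the theorem must be used to ensure that no hypothesis with non-negligible posterior mass is dropped by Algorithm~\ref{alg:sampling}, so that the planning oracle is indeed computing $\pi^{*}_{\boldtheta^t}$ on the correct problem instance rather than on a truncated surrogate.
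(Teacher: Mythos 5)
Your proposal follows essentially the same route as the paper's proof: cast each epoch as an episode of a finite-horizon (PO)MDP with horizon $L$, states given by partial query realizations (cardinality $S$), actions given by the $n$ feature queries, and terminal reward equal to the achieved utility; use the adaptive-submodularity guarantee of EC$^{2}$ to identify $\pi^{\operatorname{EC}^{2}}_{\boldtheta^{t}}$ with the optimal policy for the sampled environment; and then plug into the PSRL bound of \cite{osband2013more} with $A=n$ and total interaction length $LT$ to obtain $O(LS\sqrt{nLT\log(SnLT)})$. Your additional care points (reachable state space, measurability of the induced transition model, and the sufficiency of hypothesis sampling) are consistent with, and slightly more explicit than, the paper's own treatment.
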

We provide the proof in Appendix \ref{proof-exp-regret}.

% Fix the numbering with reference if the paper gets accepted
% While in Section 6.1, we discussed the impact of running the algorithm with different number of sampled hypothesis, we here for simplicity consider the optimistic (or extreme??) case that all hypotheses with non-zero probability in $\calH$ are sampled at line 1 of Algorithm 3. \yeolnote{Is this assumption fine? It seems that we cannot have an exact bound for the current algorithm unless we specify something like the minimum number we sampled from $\sP (Y)$ on line 2 of Algorithm 4. Should we also mention the dynamic hypothesis enumeration in \cite{uaivoi2017} which only adds a additive factor to the bound?}

The above regret bound depends on the worst-case cost $L$, which could potentially be huge, and the bound is also prior-independent such that the benefit of a good prior knowledge by the posterior sampling scheme is not reflected (as we illustrate in the experiments of Appendix~\ref{sec:priorvsutility} on the impact of priors). In the appendix, we in addition provide a prior-dependent bound based on the results from \cite{russo2016information} and \cite{lu2021reinforcement}.

\begin{figure*}[h!]
    \centering
    % \hspace{-10mm}
    \includegraphics[width=1.0\textwidth]{./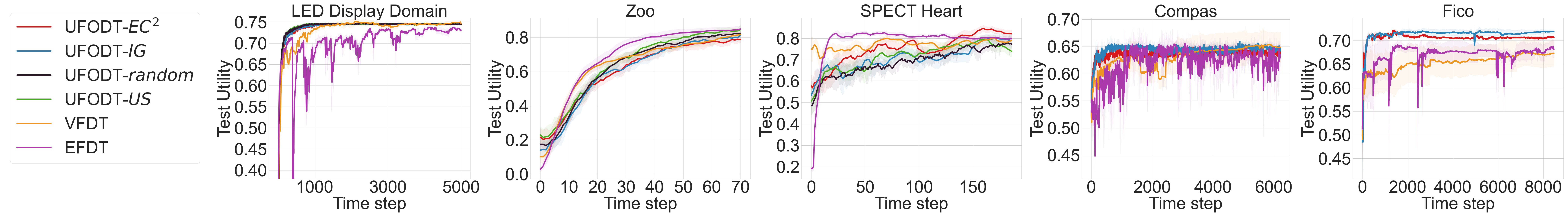}
    % \vspace{-0.35cm}
    \caption{Test utilities during training:
    UFODT reaches test utilities comparable with those from VFDT and EFDT but with significantly lower costs. UFODT performs even better than VFDT and EFDT on LED, Heart and Fico datasets.}
    \label{fig:UCI_test_utility_in_progress}
    % \vspace{-0.5cm}
\end{figure*}
\begin{figure*}[h!]
    \centering
    \includegraphics[width=1.0\textwidth]{./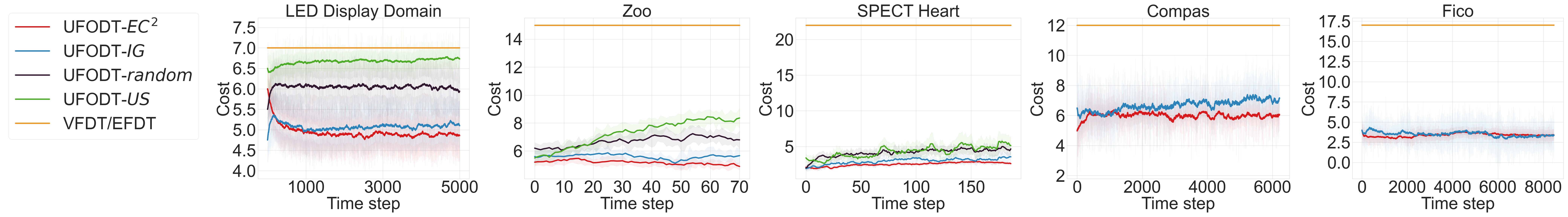}
    % \vspace{-0.35cm}
    \caption{Querying costs during training:
    UFODT-EC\textsuperscript{2} yields the lowest cost during training for all datasets. The cost of our framework is significantly lower than the VFDT and EFDT algorithms which require all feature values during training steps.
    }
    \vspace{-2mm}
    \label{fig:UCI_cost_in_progress}
    % \vspace{-0.5cm}
\end{figure*}

\section{Experimental Results}
\label{sec:experiments}
We now empirically validate our framework with real-world datasets. Unless otherwise specified, we assume that we have a uniform prior on $\boldtheta$ for each dataset initially. We evaluate the methods introduced in Section \ref{sec:algorithm} from  different aspects. We compute the average number of queries per online session to compare the costs of algorithms. We also evaluate the generalization power of classifiers via holdout test sets. Additionally, (in the appendix \ref{sec:additional_experiments}) we measure the prediction performance on training sets during learning together with various other aspects of our framework.

\noindent
\paragraph{Datasets.} We have used three 
% \todo[]{Maybe remove the votes dataset and add fico and compas}
% \todo[inline]{or five? We should maybe explain the dataset for the concept drift part here.}
 stationary datasets in our experiments that are standard binary classification datasets taken from UCI repository~\cite{Dua:2019}. Furthermore, we conduct experiments on the ProPublica recidivism (Compas) dataset \cite{recidivism2016} and the Fair Isaac (Fico) credit risk dataset \cite{dataFico} 
%  \todo[]{Maybe add fico as well.} 
 as in \cite{hu2019optimal}. In Compas dataset, we predict the individuals arrested after two years of release, and in Fico we predict if an individual will default on a loan. 
 For concept drifting experiments, we adopt the non-stationary Stagger dataset~\cite{widmer1996learning,lobo2020synthetic}, where each data has three nominal attributes and the target concept will change abruptly at some point. For extensions to continuous features (as well as for feature selection in the appendix), we use Prima Indians Diabetes Dataset \cite{Prima:1988}, Breast Cancer Wisconsin Dataset \cite{wdbc1993} and Fetal Health Dataset \cite{fetal2000}.\\
%  \todo[]{Morteza: It seems that the datasets are only for experiments 6.1, if so perhaps it is better to introduce them there.}
%, and the other one is real world image dataset called Caltech-UCSD Birds-200-2011.

\noindent
\paragraph{Algorithms.} The VFDT algorithm~\cite{domingos2000mining} is used as a classical baseline ODT model. We also compare our method with the EFDT algorithm \cite{manapragada2018extremely}. Within our proposed UFODT framework, we use four different information acquisition functions. The first one is EC\textsuperscript{2} which is proved to have near-optimal cost in offline planning. In addition to EC\textsuperscript{2}, we use Information Gain (IG) which selects the feature that maximizes the reduction of entropy in labels. Moreover, we conduct experiments with Uncertainty Sampling (US) which finds the feature causing the highest reduction in entropy of hypotheses. We also use random feature selection which randomizes the order of querying features. \vspace{-2mm}
% We include detailed algorithm descriptions for IG and US in the appendix.
% \todo[inline]{Please check the note about concept drift}

% \todo[inline]{We should explain the methods we are using in the experiments. Currently: $EC^2$, $IG$, $US$, $random$}
% We present and discuss the results of our experiments on different datasets as follows.

% \yeolnote{We should also add classical Online DT algo as baselines.}

% \yeolnote{Should the following section be divided by basic experiments and concept drifing experiments, instead of by different datasets?}

% \yeolnote{We should make the figures more reader-friendly, e.g., highlighting the line for EC2, make the axis/ticker larger.}

% \todo[inline]{The name for the subsection?}

% =====================================================
\subsection{Experiments on Stationary Datasets}
% =====================================================
\label{sec:UCI_datasets}

Figure \ref{fig:UCI_test_utility_in_progress} shows the average utility achieved in each training time step by different methods over a holdout test set. To compute the test utility for our UFODT framework (with different objectives) we use the current estimation of the parameters of the conditional distributions (i.e., the estimated $\boldtheta$ at time $t$) to obtain the test predictions. For VFDT and EFDT, we use the last version of the tree at time $t$. If a dataset is balanced we use accuracy as the utility, whereas we use f-measure for imbalanced datasets. Figure \ref{fig:UCI_cost_in_progress} shows the average cost (i.e., the number of features queried) in each training time step for different algorithms\footnote{To make the results more clear, we do not show the results of UFODT-random and UFODT-US for Fico and Compas datasets as they have higher query costs and generally lower test performances compared to UFODT-IG and UFODT-EC\textsuperscript{2}.}. 
% Additionally, in Table \ref{table:uci_cost} we show the averaged total number of features queried by each method.\todo[]{Morteza: This table looks too big!}
%In this case, we do not have feature selection and we assume all features of a test data point are available.
For all datasets, we observe that
% As can be seen in Figures \ref{fig:LED_train_utility_in_progress}, \ref{fig:Zoo_train_utility_in_progress} and \ref{fig:Heart_train_utility_in_progress},
our UFODT framework reaches a very competitive test utility during training with a much lower cost. UFODT-EC\textsuperscript{2} yields the lowest cost among other information acquisition functions which is compatible with the theoretical results. %In Figure \ref{fig:Votes_train_utility_in_progress}
% In the case of Voting Recodrs dataset, we observe that UFODT-EC\textsuperscript{2} reaches a significantly higher utility (compared to other objectives) in the first steps of training with a slightly higher cost.
% \todo[]{We should maybe remove voting records!!}
 As discussed earlier, VFDT and EFDT are costly (i.e., require access to all features) and not fully online (labels are known in advance during training), while our proposed framework is cost-effective and fully online. We observe that, with exceptionally lower cost, our framework with different algorithms still reaches comparable or even better test utilities, compared with VFDT and EFDT. The number of sampled hypotheses in UFODT are 95, 161, 34, 500 and 50 for LED Display Domain, Zoo, SPECT Heart, Compas and Fico respectively. 
%  \todo[]{Morteza: What about the other (new) datasets?} 
 In Appendix \ref{sec:additional_experiments}, we investigate other aspects of our UFODT framework using these datasets. 
\begin{figure*}[t]
    \centering
    \hspace{-7mm}
    \subfigure[Cost]
    {
        \includegraphics[width=0.5\textwidth]{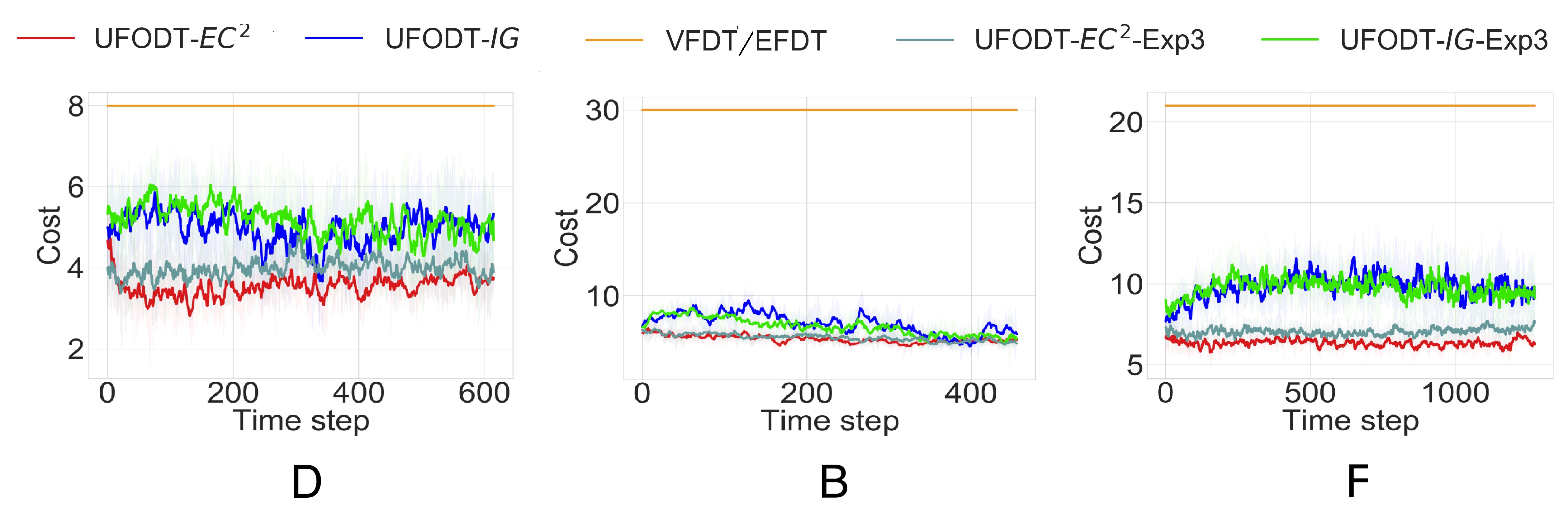}
        \label{fig:cost_in_progress_EXP3}
    }
    \subfigure[Test utility]
    {
        \includegraphics[width=0.5\textwidth]{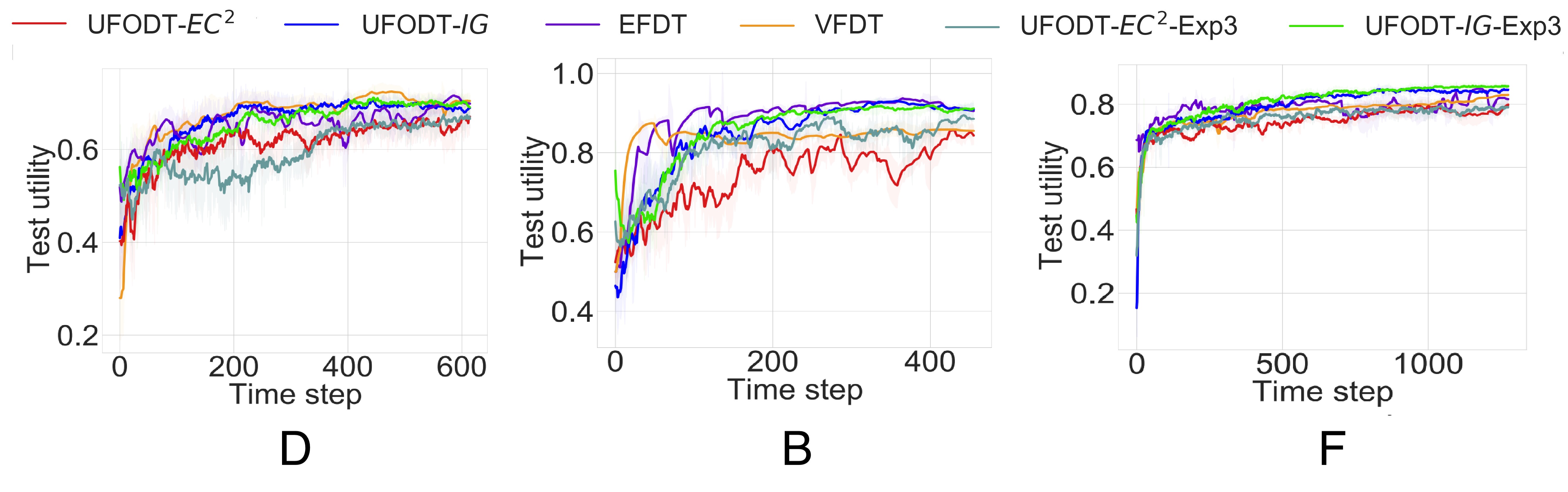}
        \label{fig:test_in_progress_EXP3}
    }
    % \subfigure[D - Cost]
    % {
    %     \includegraphics[width=0.3\columnwidth]{Figures/EXP3/Cost_in_progress_num_hypo_diabetes120_120_EXP.pdf}
    %     \label{fig:cost_in_progress_diabetes_EXP3}
    % }
    % \hspace{-7mm}
    % \subfigure[D - Test utility]
    % {
    %     \includegraphics[width=0.3\columnwidth]{Figures/EXP3/TestUtility_in_progress_num_hypo_diabetes120_120_EXP.pdf}
    %     \label{fig:test_util_in_progress_diabetes_EXP3}
    % }
    % \hspace{-3mm}
    % \subfigure[B - Cost]
    % {
    %     \includegraphics[width=0.3\columnwidth]{Figures/EXP3/Cost_in_progress_num_hypo_breastcancer500_500_EXP.pdf}
    %     \label{fig:cost_in_progress_wdbc_EXP3}
    % }
    % \hspace{-3mm}
    % \subfigure[B - Test utility]
    % {
    %     \includegraphics[width=0.3\columnwidth]{Figures/EXP3/TestUtility_in_progress_num_hypo_breastcancer500_500_EXP.pdf}
    %     \label{fig:test_util_in_progress_wdbc_EXP3}
    % }
    % \hspace{-3mm}
    % \subfigure[F - Cost]
    % {
    %     \includegraphics[width=0.3\columnwidth]{Figures/EXP3/Cost_in_progress_num_hypo_fetal500_600_EXP.pdf}
    %     \label{fig:cost_in_progress_fetal_EXP3}
    % }
    % \hspace{-3mm}
    % \subfigure[F - Test utility]
    % {
    %     \includegraphics[width=0.3\columnwidth]{Figures/EXP3/TestUtility_in_progress_num_hypo_fetal500_600_EXP.pdf}
    %     \label{fig:test_util_in_progress_fetal_EXP3}
    % }
    % % \vspace{-20mm}
    % % \subfigure
    % % {
    % %     \includegraphics[width=0.7\columnwidth]{Figures/EXP3/legend-exp3.png}
    % % }
    \vspace{-3mm}
    \caption{The aversge cost (\ref{fig:cost_in_progress_EXP3}) and average test utility (\ref{fig:test_in_progress_EXP3}) during training for Prima Indians Diabetes (D), Breast Cancer (B) and Fetal Health (F) datasets. Our framework enjoys significantly lower cost while maintaining competitive prediction accuracy. Using Algorithm \ref{alg:exp3_threshold_selection} for learning thresholds yields competitive results with those of exhaustive search, but with significantly lower running time.}
    \vspace{-3mm}
    % \label{fig:cost_in_progress_continuous_EXP3}
\end{figure*}
% \FloatBarrier
%\vspace{-2mm}
\subsection{Extension to Continuous Features}
\label{sec:cont-exp}
We here experimentally investigate the effectiveness of our UFODT framework for non-binary real-valued features (see Section \ref{sec:continuous}). We conduct our experiments on three different classification datasets: i) Prima Indians Diabetes dataset (D) which has several medical predictor variables and two classes indicating the onset of diabetes mellitus, ii) Breast Cancer Wisconsin dataset (B), and iii) Fetal Health dataset (F). We employ the two methods discussed in Section \ref{sec:continuous} to handle continuous features, i.e., exhaustive search over thresholds and learning best the thresholds via Algorithm \ref{alg:exp3_threshold_selection} (shown by UFODT-\emph{criterion}-Exp3, e.g., UFODT-IG-Exp3). The number of sampled hypotheses in UFODT are 120, 500 and 500 for Diabetes, Breast Cancer and Fetal Health respectively. We use $\eta=0.01$ in Algorithm \ref{alg:exp3_threshold_selection}.

% In this section, we experimentally investigate the effectiveness of our framework for non-binary real-valued features. 
% One natural extension of our framework for handling continuous data would be to consider Gaussian distributions for both features and parameters.
% % Specifically, we may assume that, conditioned on decision region $Y_j$, each random variable $X_i$ has a Gaussian distribution with mean $\mu_{ij}$ and standard deviation $\sigma_{ij}$. Both $\mu_{ij}$ and $\sigma_{ij}$ are then Gaussian distributed.
% However, there is a simpler way to extend our framework to continuous features and that is by "binarization" of real-valued features. In particular, for each feature we consider different thresholds for binarization and in each training time step we select the threshold that maximizes the gain based on our criterion. By collecting the history for each threshold, we can easily calculate the posterior distribution of parameters associated with the binary feature corresponding to that threshold.
% We show the results of our method on three different classification datasets. The first dataset is Prima Indians Diabetes (\cite{Prima:1988}) which has several medical predictor variables and two classes indicating the onset of diabetes mellitus. Additionally, we use Breast Cancer Wisconsin Dataset \cite{wdbc1993} and Fetal Health Dataset \cite{fetal2000}.
% The other one is a Glass Identification dataset which has 7 different classes.
% \todo[color=red]{random and US missing}

In Figure \ref{fig:cost_in_progress_EXP3}, 
% \ref{fig:cost_in_progress_glass},
we illustrate the average cost (number of queried features) in each time step of training. We compare the UFODT framework with EFDT and VFDT.
%This figure clearly shows that our methods (UFODT-EC\textsuperscript{2} and UFODT-IG) are more cost efficient with UFODT-EC\textsuperscript{2} being the best.
In Figure \ref{fig:test_in_progress_EXP3},
% -\ref{fig:test_util_in_progress_glass}
we show the test utility  during the training process (similar to Figure \ref{fig:UCI_test_utility_in_progress}). We repeat these experiments with 5 different random seeds and report the averaged results together with one-standard error. The results demonstrate again that our framework (UFODT-EC\textsuperscript{2} and UFODT-IG) has competitive prediction accuracy compared to  EFDT and VFDT while having an exceptionally lower feature acquisition cost. UFODT-EC$^{2}$ generally has the lowest cost for feature querying;  UFODT-IG yields a slightly higher cost than UFODT-EC$^{2}$ but reaches the same or even better test utilities than EFDT and VFDT algorithms. Moreover, we observe that the incurred querying costs and test utilities achieved by the threshold learning algorithm (Algorithm \ref{alg:exp3_threshold_selection}) is competitive with the exhaustive search method with a significantly lower running time. For instance, in case of UFODT-EC\textsuperscript{2}, we see that UFODT-EC\textsuperscript{2}-Exp3 yields even better test utility with slightly higher cost.
% %%%%%%%%%%%%%%%%%%%%%%%%%%%%%%%%%%%%%%
%\vspace{-1mm}
\subsection{Experiments with Concept Drift Dataset}
% =====================================================
\label{sec:exp-drift}
\begin{table}[htb!]
% \vspace{+0.2cm}
\center
\resizebox{1.0\columnwidth}{0.9\height}{
\begin{tabular}{lccccc}
\toprule
\textbf{Method} & UFODT-EC${}^2$ & UFODT-IG & UFODT-US & EFDT
\\
&(Adaptive) & (Adaptive) & (Adaptive) &
\\
\midrule
\textbf{Cost $\downarrow$} & {$\mathbf{343.3} \pm 11.0$} & $350.6 \pm 5.1$ & $477.0 \pm 13.9$ & $720.0$  \\
\bottomrule
\end{tabular}
}
%\vspace{+0.2cm}
\caption{Average feature querying costs for Stagger dataset where UFODT-based methods incur significantly lower costs.}
\vspace{-1mm}
\label{table:cost_comparisons}
\end{table}
\vspace{-3mm}
% \todo[color=red]{add space between figures.}
\begin{figure}[htb!]
    \centering
    % \hspace{+10mm}
    % \subfigure[Time step vs. test utility]{    
     \includegraphics[width=0.4\textwidth]{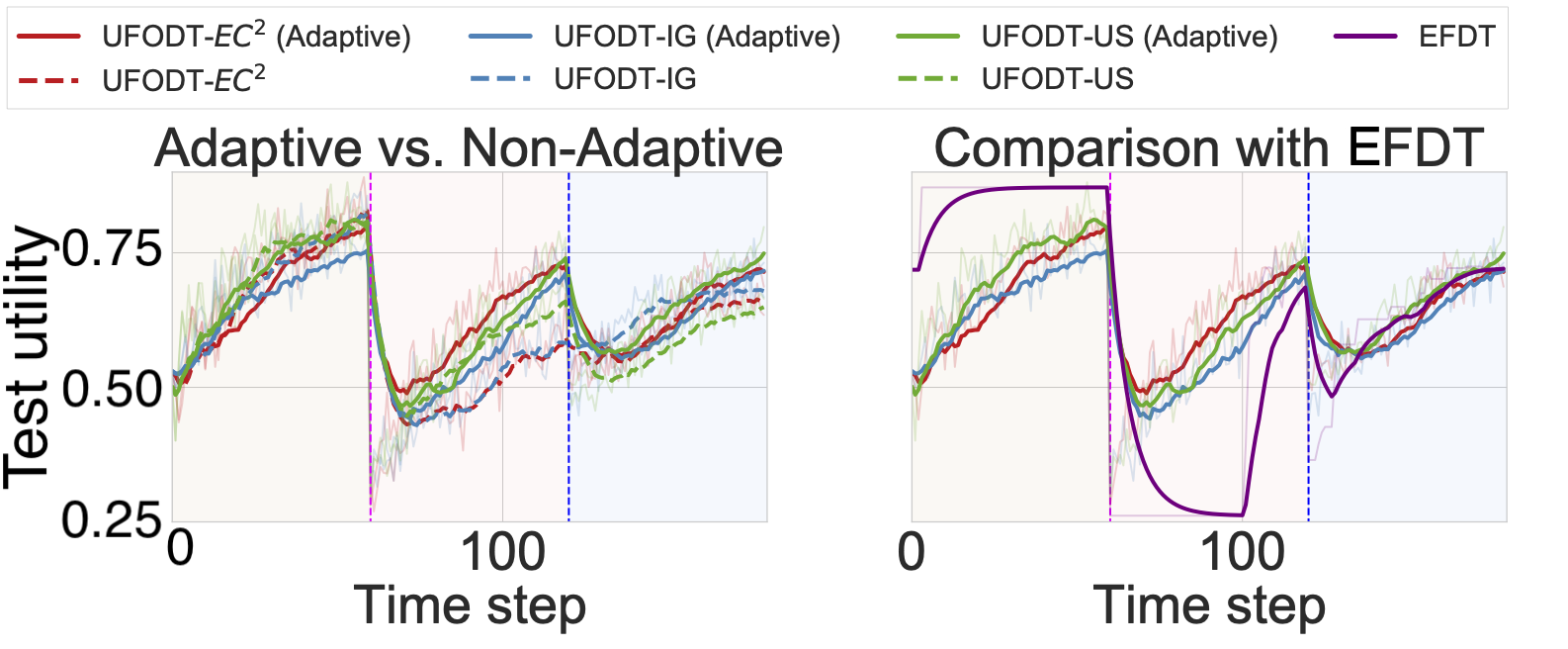}
    %  }
%     \subfigure[Quality
% of prior vs. test utility]{
%     %  \hspace{-10mm}
%      \includegraphics[width=0.4\textwidth]{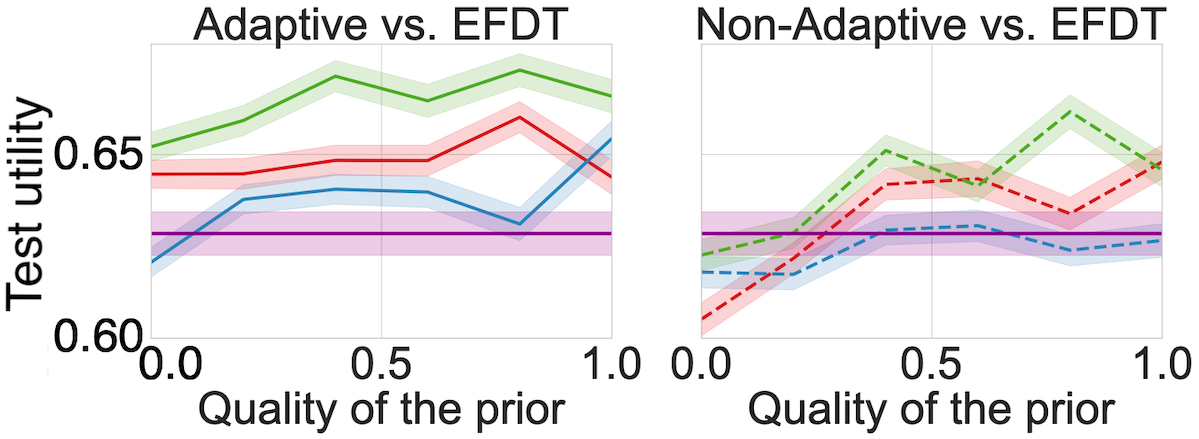}
%      \label{fig:priorvsutility}
%     }
    %  \vspace{-0.25cm}
    \caption{Time step vs. test utility on Stagger dataset. Each shaded area corresponds to one concept; the vertical dashed line shows when the drift happens. 
    % \ref{fig:priorvsutility}: quality of prior vs. test utility. Along $x$-axis, larger value corresponds to more accurate prior.
    % \label{fig:stagger-fig1}
    % \label{fig:stagger-fig2} 
    }
    \vspace{-2mm}
     \label{fig:timevsutility}
\end{figure}

% For the concept drifting, planned experiments:
% \begin{itemize}
%     \item Two synthetic datasets
%     \item Two real world datasets (if time permits, we will put it in the main paragraph. Otherwise, it will be put in the appendix.)
% \end{itemize}
% https://lucykuncheva.co.uk/EPSRC_simulation_framework/changing_environments_stage1a.htm
In this section, we demonstrate the effectiveness and flexibility of our framework under the concept drift setting. Here, the non-stationary nature of the online data imposes extra difficulty for online decision tree problems. To handle the concept drift, we adopt non-stationary posterior sampling (Algorithm~\ref{alg:nonstationary-ts}). We compare our proposed algorithm to EFDT, which is the SOTA baseline for solving concept drifting problem in online decision tree learning. 
% \todo[]{Morteza: If the refenrece is already mentioned before, no need to repeat it here.}
To simulate the concept drift scenarios, we adopt the Stagger dataset. 
In this dataset, 
% each data point has three attributes, namely $\text{shape}\in\{\text{circular, non-circular}\}$, $\text{size}\in\{\text{small, medium, large}\}$, and $\text{color}\in\{\text{red, green}\}$. Initially, the instances are labeled as positive if (color=red)$\land$(size=small). 
there are in total two concept drifting that happens abruptly at time steps $60$ and $120$. 
% After the first and the second concept drifting, the positive concept will be (color=green)$\lor$(shape=circular) and (size=medium)$\lor$(size=large), respectively. 
% In our experiments, we generate $60$ data points for each concept, and $200$ data points for testing. The concept drifting happens after time $60$ and $120$. 
% To make the conclusion more robust, 
We repeat each experiment with $10$ random seeds and report the averaged results along with one-standard error. 

\noindent
\paragraph{Test utility vs. time step.}
% \todo[]{Morteza: The lables for y-axes are different. "Mean" could be removed from the second row. Also, please use "test utility" instead of "test accuracy" to be consistent for other Figures.}
In Figure~\ref{fig:timevsutility}, we report the results of UFODT-EC${}^2$, UFODT-IG and UFODT-US with both standard posterior sampling and non-stationary posterior sampling~(denoted with \emph{Adaptive}). For all of our methods, we adopt the uniform prior. We can observe that all the three methods with non-stationary posterior sampling can adapt to the abrupt concept drift much faster, and also achieve higher test utility. We also compare these three methods against EFDT in the right part of Figure \ref{fig:timevsutility}.
% \todo[]{Morteza: Instead of writing left and right figures, name them be (a), ... and refer them using these names-}
Though, initially, EFDT can achieve higher utility than our methods, it has a big drop in utility after both the first and second concept drifting. This demonstrates advantages of our methods in quickly adapting to new concepts over EFDT. In addition, we also report the averaged total number of feature queries as a cost measure in Table~\ref{table:cost_comparisons}. We observe that our approaches require significantly fewer feature queries~(or lower cost) but still achieve higher utility than EFDT.

\vspace{-3mm}
\section{Conclusion}\label{sec:conclusion}
%Future work may include information-theoretic performance guarantee, extension to noisy case.
We make the first concrete step towards learning decision trees online with incorporating feature acquisition cost. Within the proposed framework, to learn efficiently with less time, we utilize a posterior sampling scheme; to predict efficiently with lower cost, we employ various information acquisition objectives including a surrogate objective function on utility of features, enabling near-optimal feature acquisition cost with competitive prediction accuracy. 
% \todo[color=red]{Morteza: what about the other objectives?} 
Our framework also provides several novel algorithmic contributions including a simple and flexible solution to the concept drift problem, an extension to efficiently handle real-valued features and a computationally efficient online feature selection scheme. In general, our work opens a new and practical direction of online decision tree learning on cost-sensitive applications, and we demonstrate the great potential of active information acquisition strategies in such applications. 
\section*{Acknowledgements}
This work was partially supported by the Wallenberg AI, Autonomous Systems and Software Program (WASP) funded by the Knut and Alice Wallenberg Foundation. We would like to thank Chaoqi Wang and the anonymous reviewers for their constructive comments.
\bibliographystyle{named}
\bibliography{refs}
\clearpage
\newpage

\appendix
\thispagestyle{empty}
\section{Details of Other Active Information Acquisition Objectives}
% %%%%%%%%%%%%%%%%%%%%%%%%%%%%%%%%%%%%%%%%%%%%%
In this section we provide the details of the other two active information acquisition functions which can be implemented within our framework. Unlike EC$^2$, the objective function of those approaches are not submodular, thus they may fail arbitrarily badly in certain cases, as illustrated in \cite{golovin2011adaptive}.

% \todo[]{Morteza: I suggest having this section in A}

\paragraph{Information Gain (IG).} The high-level idea of IG is to greedily select the feature that achieves the maximum entropy reduction for the label. The score of each query is defined as:
\begin{align}
\Delta_{IG}\left(u \mid \mathbf{x}_{\mathcal{F}}\right) \triangleq \mathbb{H}(Y|\mathbf{x}_{\mathcal{F}}) -  \mathbb{E}_{x_u|\mathbf{x}_{\mathcal{F}}}[\mathbb{H}(Y|\mathbf{x}_{\mathcal{F} \cup \{u\}})]. \nonumber
\end{align}

One could calculate $\mathbb{H}(Y|\mathbf{x}_{\mathcal{F}})$ as following:
\begin{align}
\mathbb{H}(Y|\mathbf{x}_{\mathcal{F}}) = - \sum_{Y_j} \mathbb{P}[Y_j|\mathbf{x}_{\mathcal{F}}]\log\mathbb{P}[Y_j|\mathbf{x}_{\mathcal{F}}], \nonumber
\end{align}
and
\begin{align}
\mathbb{P}[Y_j|\mathbf{x}_{\mathcal{F}}] = \frac {\mathbb{P}[\mathbf{x}_{\mathcal{F}}|Y_j]\mathbb{P}[Y_j]}{\mathbb{P}[\mathbf{x}_{\mathcal{F}}]} = \frac {\prod_{i \in \mathcal{F}}\mathbb{P}[x_i|Y_j]\mathbb{P}[Y_j]}{\mathbb{P}[\mathbf{x}_{\mathcal{F}}]}, \nonumber
\end{align}
where
\begin{align}
\mathbb{P}[x_i|Y_j] = \theta_{ij}^{x_i}(1-\theta _{ij})^{(1-x_i)} \nonumber
\end{align}
and
\begin{align}
\mathbb{P}[\mathbf{x}_{\mathcal{F}}] = \sum_{Y_j}\mathbb{P}[Y_j]\mathbb{P}[\mathbf{x}_{\mathcal{F}}|Y_j] = \sum_{Y_j}(\mathbb{P}[Y_j]\prod_{i \in \mathcal{F}}\mathbb{P}[x_i|Y_j]). \nonumber
\end{align}

\paragraph{Uncertainty Sampling (US).} The high-level idea of US is to greedily select the feature that maximizes the reduction of entropy in the hypotheses space. Specifically, the score of each query is defined as:
\begin{align}
\Delta_{US}\left(u \mid \mathbf{x}_{\mathcal{F}}\right) \triangleq \mathbb{H}(H|\mathbf{x}_{\mathcal{F}}) -  \mathbb{E}_{x_u|\mathbf{x}_{\mathcal{F}}}[\mathbb{H}(H|\mathbf{x}_{\mathcal{F} \cup \{u\}})].\nonumber
\end{align}
The detailed calculation can be derived similarly as that of IG's.
% \setcounter{section}{0}
% %%%%%%%%%%%%%%%%%%%%%%%%%%%%%%%%%%%%%%%%%%%%%
\section{Prior-Dependent Regret Bound}

\begin{theorem}\label{info-regret}
    \textit{(Prior-dependent regret bound)} Let $\sH( \boldtheta ^{\star})$ denote the initial information entropy of the true environment $\boldtheta ^{\star}$, and $\bar{\Gamma}$ denote the maximal information ratio\footnote{We leave the exact analytical form of $\bar{\Gamma}$ as the future work.} of Algorithm~\ref{alg:onlinesketch} with the $\operatorname{EC}^{2}$ objective. The expected total regret at epoch $T$ for Algorithm~\ref{alg:onlinesketch} with the $\operatorname{EC}^{2}$ objective is:
    \begin{align}
    \mathbb{E}[\operatorname{Regret}(T)] \leq \sqrt{\bar{\Gamma} \sH( \boldtheta ^{\star}) T}.\nonumber
    \end{align}
\end{theorem}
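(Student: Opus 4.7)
The plan is to adapt the information-theoretic regret analysis framework of Russo and Van Roy (2016), extended to structured online decision problems as in Lu and Van Roy (2021), to our posterior sampling scheme over the environment parameter $\boldtheta$. The central idea is that because $\boldtheta^t$ in Algorithm~\ref{alg:onlinesketch} is drawn from the posterior of $\boldtheta^\star$ given the history $\gH_t$, the learner's action distribution matches (conditionally on $\gH_t$) the distribution of the optimal action, so every unit of immediate regret must be ``paid for'' by information about $\boldtheta^\star$. Our target inequality will then follow from a Cauchy--Schwarz step and the fact that total information gain about $\boldtheta^\star$ is at most its initial entropy $\sH(\boldtheta^\star)$.

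First, I would formalize the per-epoch information ratio. For each epoch $t$, let $g_t \triangleq I_t(\boldtheta^\star ; (\vx^t_\gF, y^t))$ denote the conditional mutual information (given the history $\gH_t$) between $\boldtheta^\star$ and the epoch's observations produced by running $\pi^{\mathrm{EC}^2}_{\boldtheta^t}$. Define the information ratio at epoch $t$ by
\begin{equation*}
\Gamma_t \;\triangleq\; \frac{\bigl(\mathbb{E}[\Delta^t \mid \gH_t]\bigr)^2}{g_t},
\end{equation*}
and let $\bar{\Gamma} \triangleq \sup_t \Gamma_t$ as in the theorem statement. Here I am reusing the fact, established by the analysis of $\mathrm{EC}^2$ cited in Section~\ref{bound-analysis}, that $\pi^{\mathrm{EC}^2}_{\boldtheta^t}$ is a near-optimal offline policy under $\boldtheta^t$, which is what makes the posterior-sampling matching argument of Russo--Van Roy applicable without the optimality gap leaking into $\bar{\Gamma}$ beyond a constant factor.

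Second, I would chain the standard two inequalities. By the definition of $\Gamma_t$ and Cauchy--Schwarz,
\begin{equation*}
\mathbb{E}[\operatorname{Regret}(T)] = \sum_{t=1}^T \mathbb{E}[\Delta^t]
\;\leq\; \sum_{t=1}^T \sqrt{\Gamma_t \, g_t}
\;\leq\; \sqrt{T \sum_{t=1}^T \mathbb{E}[\Gamma_t\, g_t]}
\;\leq\; \sqrt{\bar{\Gamma}\, T \sum_{t=1}^T \mathbb{E}[g_t]}.
\end{equation*}
The sum of conditional mutual informations telescopes by the chain rule of mutual information to a single unconditional mutual information between $\boldtheta^\star$ and the full interaction history, which is in turn bounded by the initial entropy:
\begin{equation*}
\sum_{t=1}^T \mathbb{E}[g_t] \;=\; I\bigl(\boldtheta^\star; \gH_{T+1}\bigr) \;\leq\; \sH(\boldtheta^\star).
\end{equation*}
Combining the two bounds yields $\mathbb{E}[\operatorname{Regret}(T)] \leq \sqrt{\bar{\Gamma}\, \sH(\boldtheta^\star)\, T}$, which is exactly the claim.

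The main obstacle, which the paper itself flags in the footnote, is not the chain of inequalities but the derivation of a finite, problem-specific value of $\bar{\Gamma}$. Establishing a closed-form bound on the information ratio requires a careful analysis that (i) uses the posterior-sampling identity between the laws of the optimal policy and $\pi^{\mathrm{EC}^2}_{\boldtheta^t}$ conditional on $\gH_t$, (ii) absorbs the multiplicative $(2\ln(1/p_{\min})+1)$ factor from the adaptive-submodular near-optimality of $\mathrm{EC}^2$ into a near-optimality term on $\sU$, and (iii) controls the variance of $\Delta^t$ in terms of the mutual information the feature queries produce about $\boldtheta^\star$ under the naïve Bayes / Beta model of Section~\ref{subsec:tree}. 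In the proposal above I would therefore treat $\bar{\Gamma}$ as a given quantity (as the theorem does) and focus the formal proof on the two-step Cauchy--Schwarz plus chain-rule argument, deferring the analytical characterization of $\bar{\Gamma}$ to future work exactly as the paper does.
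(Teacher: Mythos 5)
Your proposal is correct and follows essentially the same route as the paper's own proof: both define the per-epoch information ratio of the EC$^2$ posterior-sampling policy, set $\bar{\Gamma}$ as its maximum over epochs, apply the Cauchy--Schwarz/Jensen step to get $\sqrt{\bar{\Gamma}\,T\sum_t \mathbb{E}[g_t]}$, and then use the chain rule of mutual information together with the bound by the initial entropy $\sH(\boldtheta^\star)$, while deferring the analytical characterization of $\bar{\Gamma}$ to future work. No substantive differences to report.
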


As implied by this regret bound, a more informative prior will lead to smaller value of $\sH( \boldtheta ^{\star})$, hence a better bound; this also aligns with our observations in Figure~\ref{fig:priorvsutility} (in the appendix), showing that our framework has much more practicality and flexibility over traditional ODT models: our framework can effectively use prior knowledge.

% }

This prior-dependent bound for posterior sampling is first proposed by \cite{russo2016information} for the multi-armed bandit problems. The core of the analysis is the \emph{information ratio}, which precisely captures the exploration-exploitation tradeoff of the policy at each time epoch.

This bound may potentially be ``better'' than the previous bound (Theorem \ref{exp-regret}) in terms of its dependence on the information ratio, and the dependence on the prior information (initial epistemic uncertainty) of the environment. To explain, firstly, the information ratio can be bounded by certain ``dimension'' of the problem (e.g., the feature dimension of linear bandits), which can be vastly smaller than the cardinality of action/state space; secondly, the initial epistemic uncertainty reflects our prior knowledge on the environment, whereas the previous regret bound cannot benefit from. We provide the proof of Theorem~\ref{info-regret} in Appendix \ref{proof-info-regret}.

\section{Proofs}
\subsection{Proof of Theorem~\ref{exp-regret}}
\label{proof-exp-regret}
The proof of Theorem~\ref{exp-regret} relies on the following lemma:
\begin{lemma}\label{psrl}
    \textit{(Theorem 1 of \cite{osband2013more})}
    Consider learning to optimize a random finite horizon $M = (\calB, \calA, R^{M}, P^{M}, L, \rho)$ in $T$ repeated time epochs, where $\calB$ denotes the state set with cardinality $S$, $\calA$ denotes the action set with cardinality $A$, $R^{M}$ denotes the reward function, $P^{M}_{i}(s' \mid s)$ represents the transition probability from state $s$ to state $s'$ upon choosing action $i$, $L$ represents the time horizon (i.e., number of actions) of each epoch, $\rho$ is the initial state distribution, and consider running the following algorithm: at the beginning of each epoch, we update a prior distribution over $M$ and takes a sample from the resulting posterior distribution, then we follow the policy which is the optimal for this sampled distribution to take actions sequentially during the epoch. For \emph{any} prior distribution of $M$, we have the expected regret for our algorithm as follows:
    \begin{align}
        \mathbb{E}\left[\operatorname{Regret}\left(T \right)\right]=O(L S \sqrt{A T \log (S A T)}). \nonumber
    \end{align}
\end{lemma}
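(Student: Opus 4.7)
The plan is to prove the PSRL regret bound by exploiting the hallmark identity of posterior sampling: conditional on the history $\mathcal{H}_t$ at the start of epoch $t$, the sampled MDP $M_t$ is distributed identically to the true MDP $M^\star$. This lets us compare value functions evaluated under two different (but conditionally identically distributed) MDPs and reduce the analysis to that of an optimistic algorithm (à la UCRL2) without paying for explicit optimism.

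First I would decompose the per-epoch regret. Let $V^\pi_{M,1}(s)$ denote the $L$-step value of policy $\pi$ in MDP $M$ starting from state $s$, and write $\Delta_t = V^\star_{M^\star,1}(s_{t,1}) - V^{\pi_t}_{M^\star,1}(s_{t,1})$, where $\pi_t = \pi^\star_{M_t}$. By the posterior sampling identity, $\mathbb{E}[V^\star_{M^\star,1}(s_{t,1}) \mid \mathcal{H}_t] = \mathbb{E}[V^\star_{M_t,1}(s_{t,1}) \mid \mathcal{H}_t]$, and since $\pi_t$ is optimal for $M_t$ we have $V^\star_{M_t,1} = V^{\pi_t}_{M_t,1}$. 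Taking total expectations gives
\begin{equation*}
\mathbb{E}[\Delta_t] = \mathbb{E}\!\left[V^{\pi_t}_{M_t,1}(s_{t,1}) - V^{\pi_t}_{M^\star,1}(s_{t,1})\right],
\end{equation*}
so the problem reduces to bounding the on-policy value gap between two MDPs sharing the same policy.

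Next I would apply the simulation (or ``value difference'') lemma to expand this gap as a sum of one-step Bellman differences along the trajectory: for any $\pi$,
\begin{equation*}
V^{\pi}_{M_t,1}(s) - V^{\pi}_{M^\star,1}(s) = \mathbb{E}_{M^\star,\pi}\!\left[\sum_{\ell=1}^{L}\bigl(R^{M_t}-R^{M^\star}\bigr)(s_\ell,a_\ell) + \bigl(P^{M_t}-P^{M^\star}\bigr)(\cdot\mid s_\ell,a_\ell)^\top V^{\pi}_{M_t,\ell+1}\right].
\end{equation*}
Because $\|V^\pi_{M_t,\ell+1}\|_\infty \le L$ (boundedness of rewards and horizon), each term is controlled by the total-variation/supremum deviation between the sampled and true reward and transition kernels at the visited $(s,a)$ pairs.

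The central step, and the main obstacle, is to bound these deviations uniformly over epochs with high probability. I would build Hoeffding/empirical-Bernstein confidence sets $\mathcal{M}_t$ around the posterior mean such that $M^\star \in \mathcal{M}_t$ w.h.p., and then use the posterior sampling identity a second time to conclude $M_t \in \mathcal{M}_t$ w.h.p. as well. Thus both $M_t$ and $M^\star$ agree with the empirical model up to a confidence radius that shrinks like $\sqrt{S \log(SAT)/N_t(s,a)}$, where $N_t(s,a)$ is the visit count. Summing these radii along trajectories via the standard pigeonhole trick $\sum_t \sum_\ell 1/\sqrt{N_t(s_\ell,a_\ell)} = O(\sqrt{SALT})$ and multiplying by the $L$ prefactor from the horizon and the extra $\sqrt{S}$ from the $\ell_1$ norm of the transition deviation yields the claimed $O(LS\sqrt{AT\log(SAT)})$ bound. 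The delicate parts are (i) handling the low-probability event that $M^\star$ leaves the confidence set (contributing only an $o(\sqrt T)$ term), and (ii) carefully aggregating the per-step errors so that only one factor of $L$ (from summing along the horizon) and one factor of $S$ (from the transition $\ell_1$ deviation) appear, rather than two of each.
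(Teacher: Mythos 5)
The paper does not actually prove this statement: it is imported verbatim as Theorem~1 of Osband et al.\ (2013), and the only ``proof'' in the paper is the citation. Your sketch is a faithful reconstruction of the argument given in that reference, following the same route --- the posterior-sampling identity $\mathbb{E}[V^{\star}_{M^{\star}}\mid\mathcal{H}_t]=\mathbb{E}[V^{\star}_{M_t}\mid\mathcal{H}_t]$ to reduce Bayesian regret to an on-policy value gap, the simulation lemma, history-measurable confidence sets that contain both $M^{\star}$ and (by a second application of the identity) $M_t$ with high probability, and the pigeonhole summation over visit counts --- with the factors of $L$, $S$, and $\sqrt{\log}$ accounted for correctly. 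The one small inaccuracy is that the confidence sets in the cited proof are centered at the empirical means of the observed rewards and transitions rather than at the posterior mean, though either choice is $\mathcal{H}_t$-measurable and the argument goes through unchanged.
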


\begin{proof}[Proof of \thmref{exp-regret}]
For simplicity we consider the optimistic case that we have sampled a sufficient number of times from the decision region $\sP (Y)$, such that all hypotheses with non-zero probability in $\calH$ are enumerated\footnote{In a weaker form, it has been proved in \cite{uaivoi2017} that sampling only the \emph{most likely} hypotheses will lead to just an additive factor to the regret bound. Our framework holds the similar argument, while enjoying a simpler hypothesis generating scheme.}. (Notice that in in Section~\ref{sec:UCI_datasets} and Section~\ref{add-complexity}, we have discussed the impact of running algorithms with different numbers of sampled hypotheses, and show that in practice our framework can still have very competitive performance even with insufficient hypothesis sampling.)

Our problem can then be viewed as a Partially Observable Markov Decision Process (POMDP) with a posterior sampling algorithm, specifically:
\vspace{-0.05in}
\begin{itemize}
    \item Time horizon $L$: The number of feature queries made during each epoch can be considered as the time horizon. This aligns with our definition of $L$ in Theorem~\ref{exp-regret}.
    \item Set of actions $\calA$: Each feature query at a certain time step within an epoch can be considered as an action. Thus the cardinality $A$ in the above bound is equivalent to the number of features $n$.
    \item Set of states $\calB$: Intuitively, we take each action based on current observations from the feature query. Thus, each sequential query set with the resulting outcomes can be considered as a state. The number of possible realizations during an epoch is then equivalent to the cardinality of the state set $S$.
    \item Transition probability $P^{M}_{i}(s' \mid s)$: A belief state $s$ consists of selected queries with observed feature values, such that the state transition probability $P^{M}_{i}(s' \mid s)$ can be fully specified by $\sP [X_i \mid Y]$ as described in Section~\ref{sec:formulation}, without the use of hidden states.
    \item Initial distribution $\rho$: Similarly, this can be fully specified by $\sP [X_i \mid Y]$ and the given $\sP [Y]$.
    \item Reward function $R^{M}$: We consider the reward as the expected utility achieved upon termination: we get zero reward if the algorithm continues to query features (i.e., stopping condition not reached), and get expected reward $\sU (\pi | h) \triangleq \max _{y \in \mathcal{Y}} \mathbb{E}_{y _{\text{true}}}[u(y_{\text{true}}, y) \mid \calS (\pi, h)]$ upon termination by the policy based on the true hypothesis.
    \item Optimal policy for $M$: As illustrated at the beginning of Section~\ref{bound-analysis}, our active planning algorithm EC$^2$ achieves the same utility as the optimal policy under the same environment $\boldtheta$. Thus, $\pi ^{\operatorname{EC}^{2}} _{\boldtheta ^{t}}$ can be considered as the optimal policy for the sampled $M$ in each epoch.\vspace{-0.08in}
\end{itemize}

By replacing the notations on the cardinality of the action space in~\lemref{psrl}, we have the expected regret of Algorithm~\ref{alg:onlinesketch} with the EC$^2$ objective as $\mathbb{E}[\operatorname{Regret}(T)]=O(L S \sqrt{n L T  \log (S n L T )})$, as shown in Theorem~\ref{exp-regret}. Notice that the theorem requires each episode being solved optimally, thus we have adding $L$ into the expression to ensure that the greedy policy achieves the same utility (\textit{i.e.}, full coverage) as the optimal policy. %\QED
%thus we complete the proof. \QED
\end{proof}

% =============================================
\subsection{Proof of Theorem~\ref{info-regret}}
\label{proof-info-regret}

\begin{proof} We define the \emph{information ratio} of Algorithm~\ref{alg:onlinesketch} as follows:
    \begin{align}
        \Gamma_{t}^{\mathrm{EC}^{2}}
        =\frac{\left(\mathbb{E}\left[\sU \left(\pi^{*}_{\boldtheta ^{\star}}\right) - \sU \left(\pi_{\boldtheta ^{t}}^{\mathrm{EC2}}\right) \right]\right)^{2}}{\mathbb{E}_{h} \left[\mathbb{I}\left(\boldtheta ^{\star}; (\boldtheta ^{t}, \vx _{\pi_{\boldtheta ^{t}}^{\mathrm{EC2}}}, y^{t}, h) \mid \mathbb{O}^{t-1}\right) \right]}, \nonumber
    \end{align}

where $\vx _{\pi_{\boldtheta ^{t}}^{\mathrm{EC2}}}$ represents all the queries and the associated outcomes made by Algorithm~\ref{alg:onlinesketch} with the EC$^{2}$ objective under the sampled environment $\boldtheta ^{t}$, $\mathbb{O} ^{t-1}$ represents all the decision and observation history up to the epoch $t - 1$, and $\sI(\cdot)$ represents the mutual information (i.e., entropy reduction). We omit the $\mathbb{E}_{h}$, $\mathbb{O}^{t-1}$ and $h$ terms in the following to simplify notations.

Simply put, the numerator is the square of the expected \emph{immediate regret} at epoch $t$, and the denominator captures the expected information gain on the true environment $\boldtheta ^{\star}$ by implementing the current policy. The information ratio as a whole can be interpreted as ``the expected regret incurred per bit of information acquired''~\cite{russo2017tutorial}.

Define the maximal information ratio for the algorithm as $\bar{\Gamma}=\max _{t \in\{1, \ldots, T\}} \Gamma_{t}^{\operatorname{EC}^{2}}$. Following the proof in Proposition 1 of \cite{russo2016information}, we derive the bound of Algorithm~\ref{alg:onlinesketch} as follows:
\begin{align}
    \mathbb{E}[\operatorname{Regret}(T)] &=\sum_{t=1}^{T} \mathbb{E}\left[\sU\left(\pi^{*}_{\boldtheta ^{\star}}\right) - \sU\left(\pi_{\boldtheta ^{t}}^{\mathrm{EC2}}\right)\right] \nonumber \\
    % &=\sum_{t=1}^{T} \mathbb{E}\left[\sU\left(\pi^{\mathrm{EC2}}_{\boldtheta}\right) - \sU\left(\pi_{\boldtheta ^{t}}^{\mathrm{EC2}}\right)\right] \nonumber \\
    &=\sum_{t=1}^{T} \sqrt{\Gamma_{t}^{\operatorname{EC}^{2}} \mathbb{I}\left(\boldtheta ^{\star}; (\boldtheta ^{t}, \mathbf{x}_{\pi_{\boldtheta ^{t}}^{\mathrm{EC2}}}, y^{t}) \right)} \nonumber \\
    & \leq \sqrt{\bar{\Gamma} T \sum_{t=1}^{T} \mathbb{I}\left(\boldtheta ^{\star}; (\boldtheta ^{t}, \mathbf{x}_{\pi_{\boldtheta ^{t}}^{\mathrm{EC2}}}, y^{t}) \right)} \nonumber \\
    &\leq \sqrt{\bar{\Gamma} \mathbb{H}\left(\boldtheta ^{\star} \right) T}. \nonumber
\end{align}

The third step is by Jensen's inequality, and the fourth step is by the chain rule of mutual information. Notice the above bound can be further improved by utilizing the average information ratio or considering the time-varying property of $\Gamma _{t}$ (\cite{devraj2021bit}). A promising next step is to find the closed form of the information ratio (or the ``effective dimension'' of the problem) by applying the auxiliary function of entropy by \cite{chen2017near} against the prediction error rate. In this way we could establish our problem-specific connection between the immediate regret and the information gain, and use it to guide a more efficient sampling.
%, which we will leave for future work.
\thispagestyle{empty}
%
% \yeolnote{To add detailed descriptions on the intuition behind the gamma on understanding the environment (epistemic uncertainty), and the potential future work in working out the exact analytical form of gamma based on existing results (e.g., the auxiliary function of entropy from \cite{chen2017near}).}
\end{proof}

% \setcounter{section}{1}
% %%%%%%%%%%%%%%%%%%%%%%%%%%%%%%%%%%%%%%%%%%%%%
\section{Details of Extension to Datasets with Continuous Features}
\label{sec:continuous_featurs_details}

In this section, we provide the detail of the method suggested in Section \ref{sec:cont-exp} to extend our framework to datasets with continuous features.
%we provide the heuristics to extend our framework to datasets with continuous features with implementation details for Section \ref{sec:cont-exp}. 
For each random variable (feature) $X_i$ we assume
%that there exist
the $K$ different binary latent random variables $\{Z_{i1}, Z_{i2}, \dots, Z_{iK}\}$, where each of them
%of these latent random variables 
corresponds to a threshold for binarizing $X_i$. Given label $Y_j$, we assume the random variable $Z_{ik}$ is distributed by a Bernoulli distribution with parameter $\theta _{ij}^{(k)} = \mathbb{P}\left[X_i, Z_{ik} = 1\mid Y_j \right]$. As before, we may assume some prior information about $\theta _{ij}^{(k)}$ in the form of a prior (Beta) distribution
% \todo[]{Morteza: In footnote, first sentence: H or h? This sentence (the term grouping) is a bit unclear.}
\footnote{Note that by grouping the binary latent random variables $\{Z_{i,j}\}_{j\in[K]}$ based on feature $X_i$, the $Z_{i,j}$'s are dependent conditioned on a hypothesis $H$. Our analysis in \secref{bound-analysis} no longer applies as $\mathrm{EC}^2$ relies on the conditional independence assumption to achieve the near-optimal cost guarantee for each online session. Nevertheless, one can still apply the proposed algorithm as a heuristic to handle continuous features.}.

In each time $t$, we start by sampling from the posterior distribution of parameters $\theta _{ij}^{(k)}$ for all $i,j,k$. Similar to Algorithm \ref{alg:offlinesketch}, we seek for the feature that maximizes the feature query score. In the case of continuous features, we need to additionally find the best binarization threshold for each feature; this can be done by computing the gains achieved with each threshold and selecting the one that maximizes the gain (or use Algorithm \ref{alg:exp3_threshold_selection} to select thresholds). At the end of the epoch, we update the posterior distributions of all parameters corresponding to the thresholds and the features selected during the epoch.

Note that in each epoch we need to calculate $\mathbb{P}\left[h\right]$ for all available hypotheses $\{h\}$, which typically requires access to the parameters $\theta _{ij}$:

\begin{align}
\mathbb{P}\left[h\right] = \sum_{Y_j \in \mathcal{Y}} \mathbb{P}\left[h|Y_j \right]\mathbb{P}\left[Y_j\right], \nonumber
\end{align}
where
\begin{align}
\mathbb{P}\left[h|Y_j\right] = \prod_{i \in \mathcal{Q}} \mathbb{P}\left[X_i=h_i|Y_j\right] = \prod_{i \in \mathcal{Q}} \theta _{ij}^{h_i}(1-\theta _{ij})^{(1-h_i)}. \nonumber
\end{align}

In the continuous setting, for each pair $(X_i, Y_j)$, we have $K$ different parameters $\theta _{ij}^{(k)}$. In practice, we may use the weighted average value of these parameters as an estimation of $\theta _{ij}$, according to the number of times  the corresponding thresholds are used for the label $Y_j$.

\section{Faster UFODT}
\label{sec:feature_selection}
% Talk about time complexity 
As mentioned before, in each epoch of our online decision tree learning framework, we aim to optimize the utility of features, i.e., to maximize $\sU (\vx _{\calF})$ with the cheapest query set $\calF$. We do this optimization by greedily maximizing the score of features based on an information acquisition (surrogate) function (line 4 of Algorithm \ref{alg:offlinesketch}). The computational cost of UFODT in each time step is dominated by the computation of such scores which is determined by the total number of hypotheses. For instance, in case of UFODT-EC\textsuperscript{2}, calculating $\Delta_{E C^{2}}\left(u \mid \mathbf{x}_{\mathcal{F}}\right)$ takes $\mathcal{O}(|\mathcal{H}|^2)$ time for a binary feature $u$ where $|\mathcal{H}|$ is the number of hypotheses \cite{nipsnoisy2013}. $|\mathcal{H}|$ grows exponentially with the number of features. As a result, we develop two  solutions to make UFODT faster: i) we  reduce the number of score calculations, and ii) we reduce the number of features. In what follows we present a practical method for them. % this purpose. 
\paragraph{Feature selection.} Feature selection is widely used in batch machine learning to improve the efficiency of learning algorithms and also to prevent overfitting. However, the conventional feature selection methods are not well-suited for online learning scenarios. To our knowledge, there has not been much work on feature selection for streaming data points. The work in \cite{wang2013online} proposes an Online Feature Selection (OFS) algorithm that is able perform feature selection from partial inputs. Their algorithm uses  $\epsilon$-greedy to select a constant number of features in each time step. Specifically, they train an online perceptron classifier, and in each time step, features with highest weights are chosen with probability $1-\epsilon$. Otherwise, a random set of features is selected (w.p. $\epsilon$) to allow for exploration. To train the weights with partial inputs, they use an unbiased estimate of each feature. This algorithm is not directly applicable to our framework as we query different number of features in each time step. So, we modify it and use this new modified OFS as a component within UFODT. At each time step $t$, we start by selecting a subset $\mathcal{C}_t \subset \gQ$ 
% \todo[]{Morteza: Havent you already defined Q in some other way before?}
of features according to $\epsilon$-greedy based on the current weights. We then use UFODT as before, except that we only query from the features in $\mathcal{C}_t$ in the planning phase. In other words, we have two stages of feature selection: first the OFS algorithm selects the features available for querying, and then UFODT queries a subset of those selected features according to the information acquisition function. At the end of time $t$, we need to update the weights of our OFS algorithm. For that, we use the following estimate of each feature $x_i$ of data point $\vx ^t$:

\[
\hat{x}_i = \frac{\mathbbm{1}\{(i \in \mathcal{F}) \wedge (i \in \mathcal{C}_t)\}x_i}{\frac{B}{n}\epsilon + \mathbbm{1}\{(i \in \mathcal{F}) \wedge (i \in \mathcal{C}_t) \} (1-\epsilon)},
\]

\noindent where $B$ is the number of features selected by OFS, and $\mathcal{F}$ is the feature set queried by UFODT. 
% \todo[color=red]{Morteza: Since for every algorithmic procedure we have an Algorithm, do you think if it is good to have an Algorithm for this one as well?}

\section{Additional Experimental Results}
\label{sec:additional_experiments}
\subsection{Feature Selection}
In this section, we study
%the results of applying 
application of our feature selection scheme (Appendix \ref{sec:feature_selection}) to the UFODT framework. We use the same datasets as in Section \ref{sec:cont-exp}. We compare the feature querying cost and test utility achieved when using our OFS method (shown by UFODT-\emph{criterion}-OFS) with those achieved by VFDT, EFDT, and UFODT (without feature selection and using exhaustive search over thresholds). In Figures \ref{fig:cost_in_progress_diabetes_OFS}, \ref{fig:cost_in_progress_wdbc_OFS}, and \ref{fig:cost_in_progress_fetal_OFS}, we observe that using OFS clearly reduces the querying cost (and thereby running time) of UFODT for both EC\textsuperscript{2} and IG. Using feature selection causes decrease of test utility for UFODT-EC\textsuperscript{2}-OFS (especially for the Diabetes dataset shown in Figure \ref{fig:test_util_in_progress_diabetes_OFS}). However, for Breast Cancer and Fetal Health (Figures \ref{fig:test_util_in_progress_wdbc_OFS} and \ref{fig:test_util_in_progress_fetal_OFS}), we observe that UFODT-EC\textsuperscript{2}-OFS has very close test performance to that of UFODT-EC\textsuperscript{2} or reaches the test performance of UFODT-EC\textsuperscript{2} at later training time steps. The test utility of UFODT-IG-OFS is very close to that of UFODT-IG and even better in some time steps. These results indicate that we can use our feature selection scheme together with UFODT to reduce the computational cost of our framework for datasets with large number of features.

\begin{figure}[htb!]
    % \centering
    \hspace{-5mm}
    \subfigure[D - Cost]
    {
        \includegraphics[width=0.5\columnwidth]{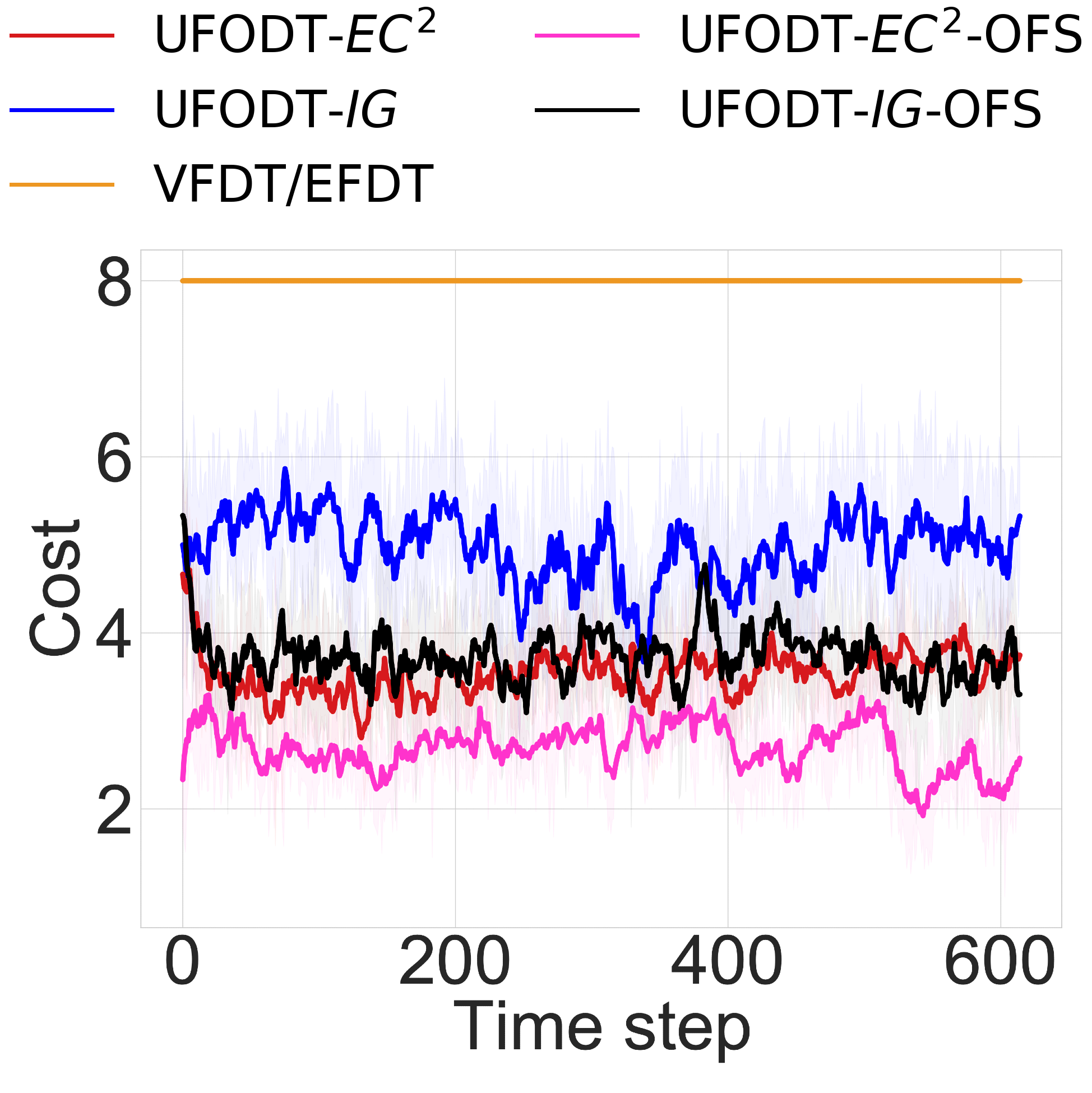}
        \label{fig:cost_in_progress_diabetes_OFS}
    }
    \hspace{-6mm}
    \subfigure[D - Test utility]
    {
        \includegraphics[width=0.5\columnwidth]{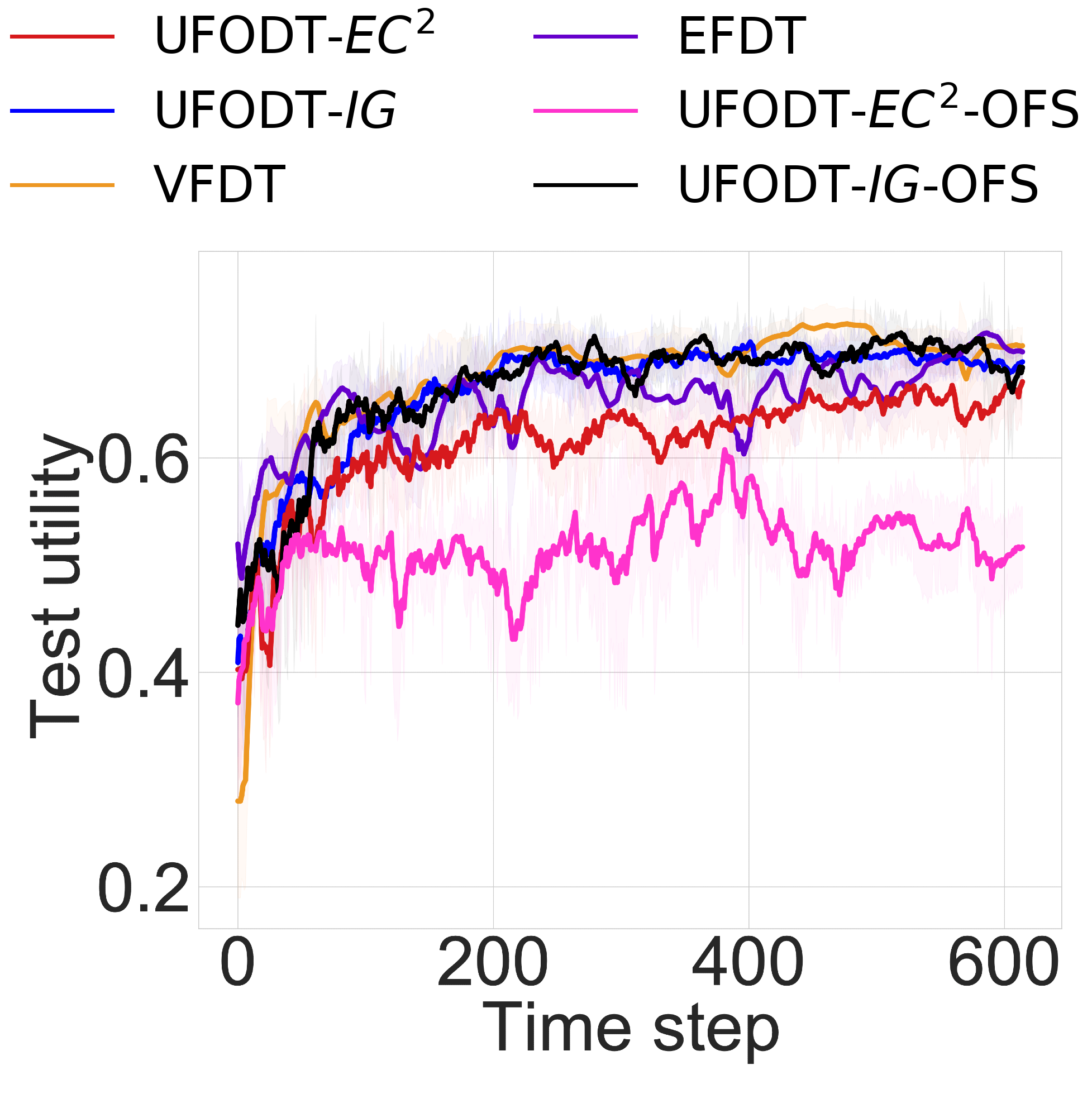}
        \label{fig:test_util_in_progress_diabetes_OFS}
    }
    \subfigure[B - Cost]
    {
        \hspace{-4.5mm}
        \includegraphics[width=0.5\columnwidth]{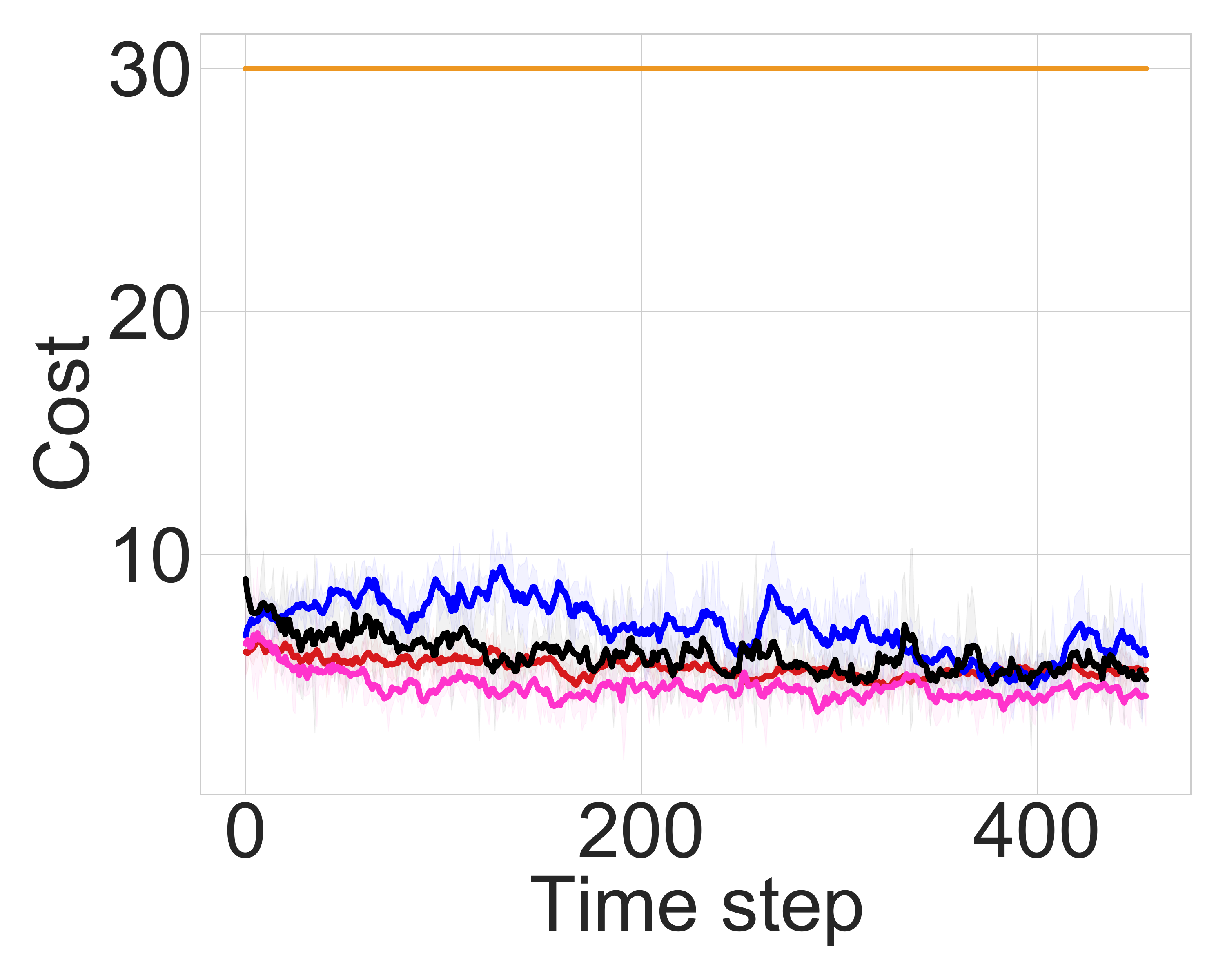}
        \label{fig:cost_in_progress_wdbc_OFS}
    }
    \hspace{-6mm}
    \subfigure[B - Test utility]
    {
        \includegraphics[width=0.5\columnwidth]{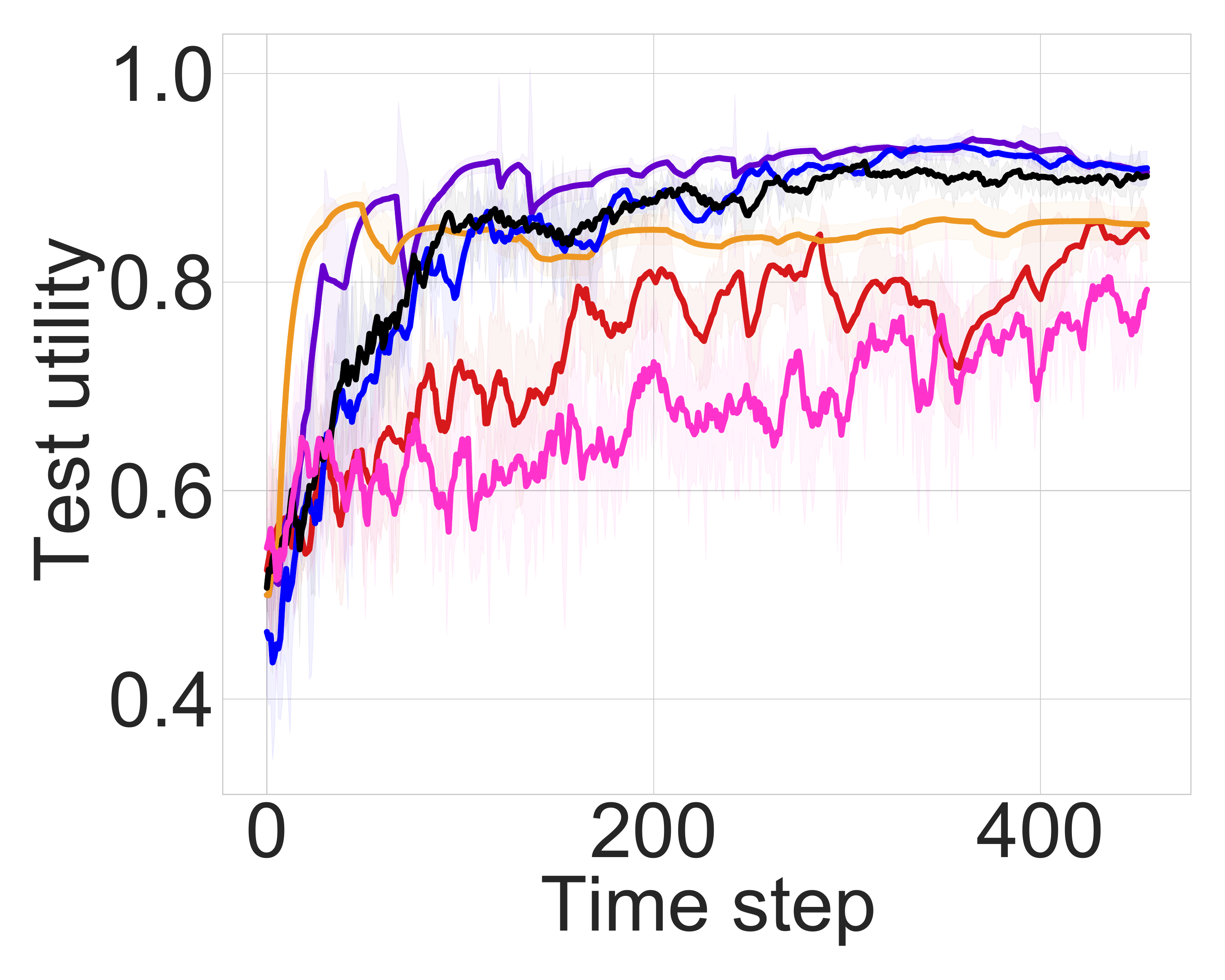}
        \label{fig:test_util_in_progress_wdbc_OFS}
    }
    \subfigure[F - Cost]
    {
        \hspace{-4.5mm}
        \includegraphics[width=0.5\columnwidth]{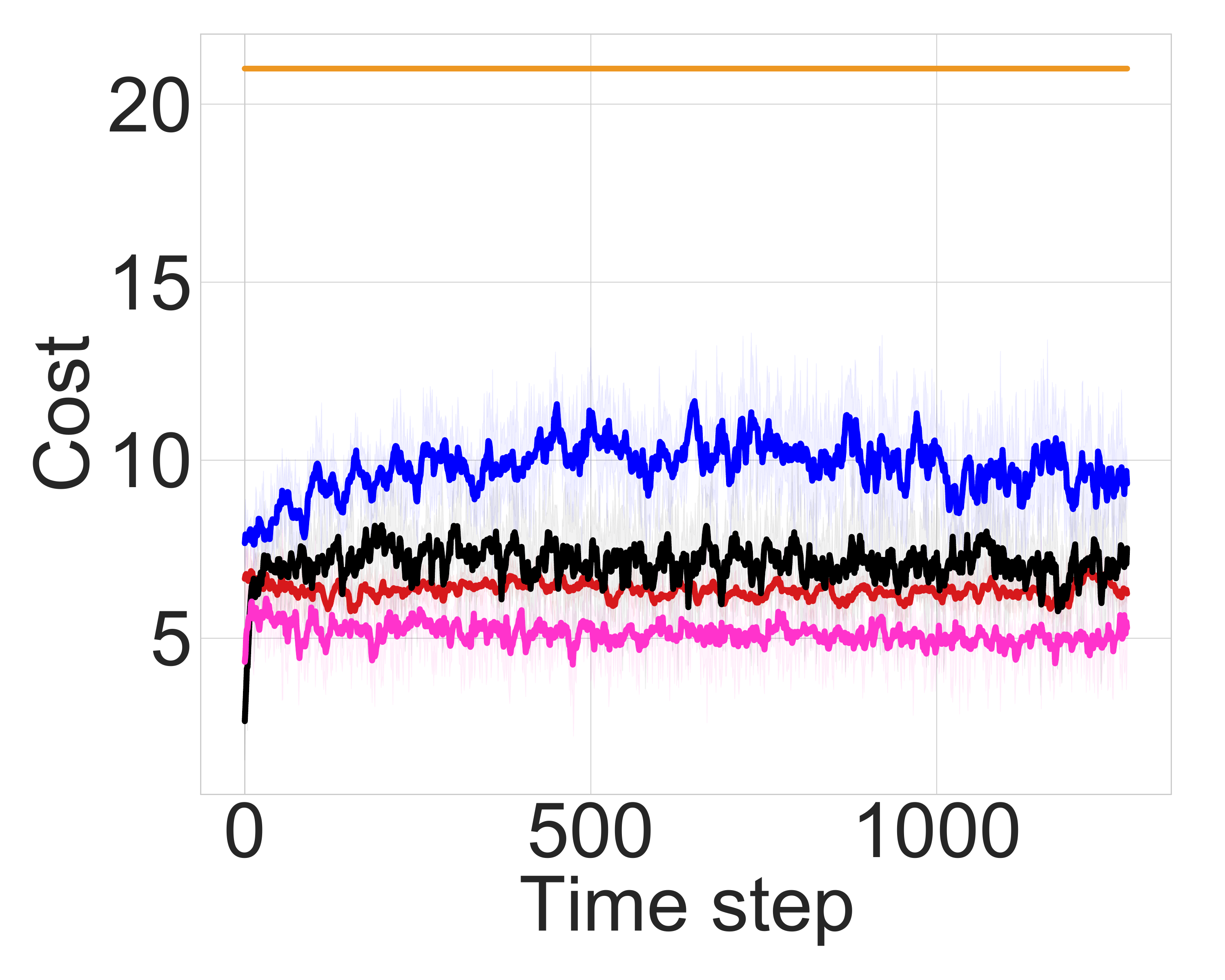}
        \label{fig:cost_in_progress_fetal_OFS}
    }
    \subfigure[F - Test utility]
    {
        \hspace{-2.5mm}
        \includegraphics[width=0.5\columnwidth]{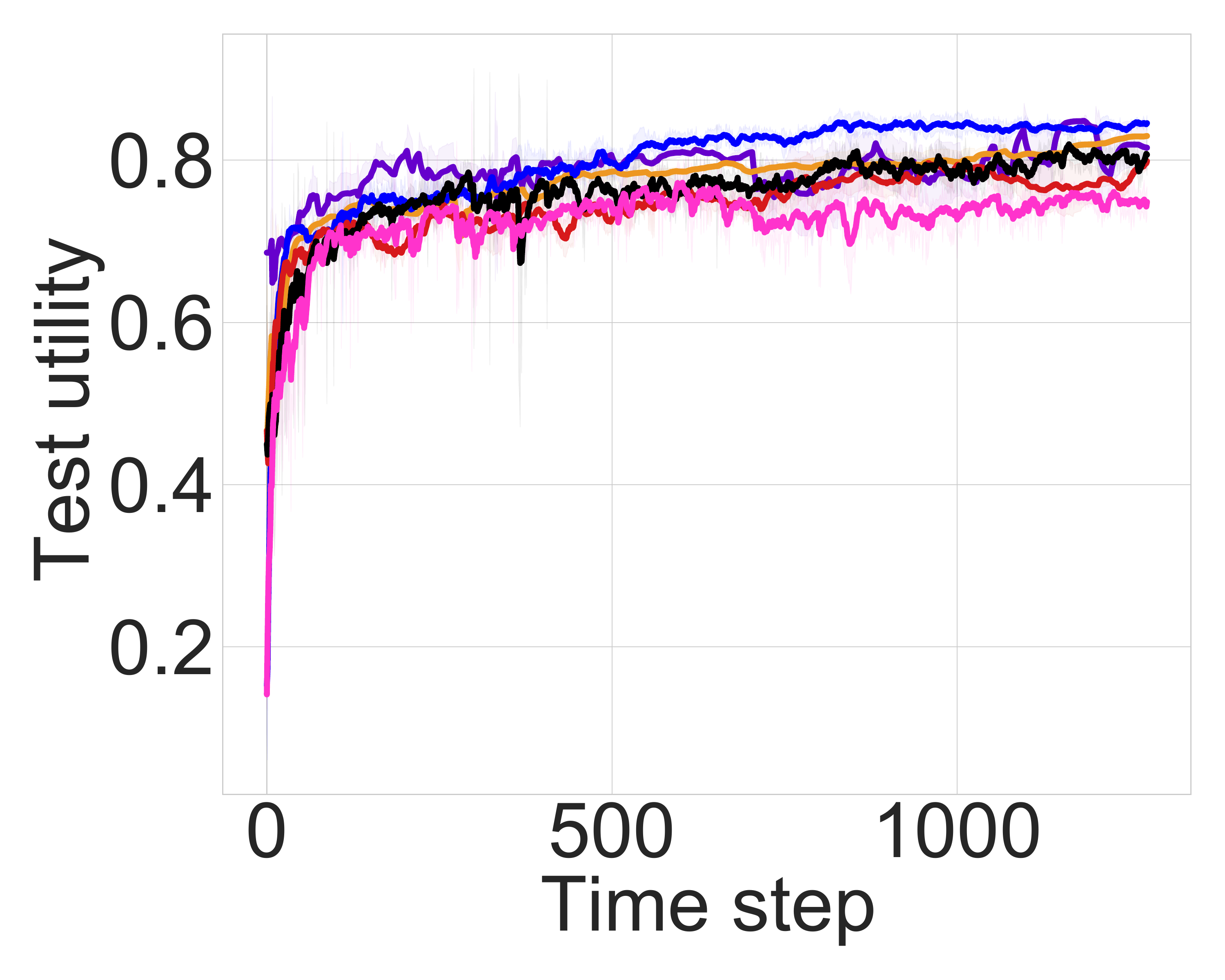}
        \label{fig:test_util_in_progress_fetal_OFS}
    }
    \caption{The cost (a,c,e) and test utility (b,d,f) during the training process for Prima Indians Diabetes (D), Breast Cancer (B) and Fetal Health (F) datasets when using our UFODT framework together with feature selection. Our feature selection scheme generally maintains competitive test utilities while having lower feature query costs and lower time complexity.}
    \label{fig:cost_in_progress_continuous_OFS}
\end{figure}

\subsection{Utility and Cost for Different Numbers of Sampled Hypotheses.} Figure \ref{fig:UCI_Cost_vs_Hypo} shows the average cost (i.e., the number of features queried within an epoch) during training as a function of the total number of hypotheses sampled for LED, Zoo and Heart datasets. Figure \ref{fig:UCI_Utility_vs_Hypo} shows the total utility during training versus the total number of sampled hypotheses. We observe that UFODT-EC\textsuperscript{2} yields the lowest cost in the three cases and its utility is competitive compared to the best results. This observation shows that UFODT-EC\textsuperscript{2} tends to find more informative features to query, meaning that with less number of features (lower cost) it can reach a high utility. UFODT-IG also yields a better cost than random feature selection and UFODT-US. On the other hand, both UFODT-random and UFODT-US require a large number of queries which does not necessarily help them to attain high utilities. 
% There is also one case
% (Figures \ref{fig:Votes_Cost_vs_Hypo} and \ref{fig:Votes_Utility_vs_Hypo})
% (Voting Records)
% in which UFODT-EC\textsuperscript{2} yields slightly higher cost than the others, yet it reaches a utility that is significantly higher than any other algorithm. \todo[]{Remove votes? Morteza: I think it is good to keep it, if we have space.}

\subsection{Train Utility During Training.} Figure \ref{fig:UCI_train_utility_in_progress} illustrates the utility (accuracy or F-measure) on the training datasets during learning. 
% Additionally, in Table \ref{table:uci_cost} we show the averaged total number of features queried by each method.\todo[]{Morteza: This table looks too big!}
%In this case, we do not have feature selection and we assume all features of a test data point are available.
For all the three datasets, we observe that
% As can be seen in Figures \ref{fig:LED_train_utility_in_progress}, \ref{fig:Zoo_train_utility_in_progress} and \ref{fig:Heart_train_utility_in_progress},
 UFODT-EC\textsuperscript{2} reaches a very competitive utility during training with a much lower cost. %In Figure \ref{fig:Votes_train_utility_in_progress}
% In the case of Voting Recodrs dataset, we can see that UFODT-EC\textsuperscript{2} reaches a significantly higher utility in the first steps of training with a slightly higher cost.
% \todo[]{Remove voting?}
The number of sampled hypotheses are similar to that of Figure \ref{fig:UCI_test_utility_in_progress}.

\subsection{The Impact of the Number of Sampled Hypotheses on Concept Drift Experiments}\label{add-complexity}
Since UFODT relies on hypothesis sampling~(see Algorithm~\ref{alg:sampling}), %for constructing the graph, 
we further investigate how its performance is affected by the number of sampled hypothesis. The results are presented in Figure~\ref{fig:stagger-fig3} (left: non-stationary posterior sampling, right: standard posterior sampling), using the Stagger dataset, in complement to the concept drift experiments in Section~\ref{sec:exp-drift}. As expected, by increasing  the number of sampled hypothesis, the test utility also increases. However, the test utility usually saturates at some  early stage (e.g., when the number of sampled hypothesis is around 9 in this case). This implies that enumerating all the possible hypothesis may not be necessary, so that sampling can help to reduce the running time to a great extent.

\subsection{The Impact of Priors on Concept Drift Experiments} 
\label{sec:priorvsutility}
UFODT can easily incorporate expert's knowledge by using the informative priors, enjoying superior flexibility over classic decision tree algorithms. To simulate different experts, we generate a collection of priors that interpolate between the uniform prior~(uninformative) and the ``optimal'' prior~(expert). We report the average test utility for different priors in Figure~\ref{fig:priorvsutility}. 
% \todo[]{Morteza: Which parts of Figure?} 
We observe that as the quality of the prior improves, the average test utility increases and surpasses EFDT by a larger margin. In general, with more informative priors, our methods perform better, which is consistent with the prior dependent regret bound in Theorem~\ref{info-regret}.
% \todo[color=red]{Replace vfdt with efdt in figure 9 and add legends}
% %%%%%%%%%%%%%%%%%%%%%%%%%%%%%%%%%%%%%%%%%%%%%
\begin{figure*}[t]
    \centering
    \vspace{-0.2cm}
    \includegraphics[width=1.0\textwidth]{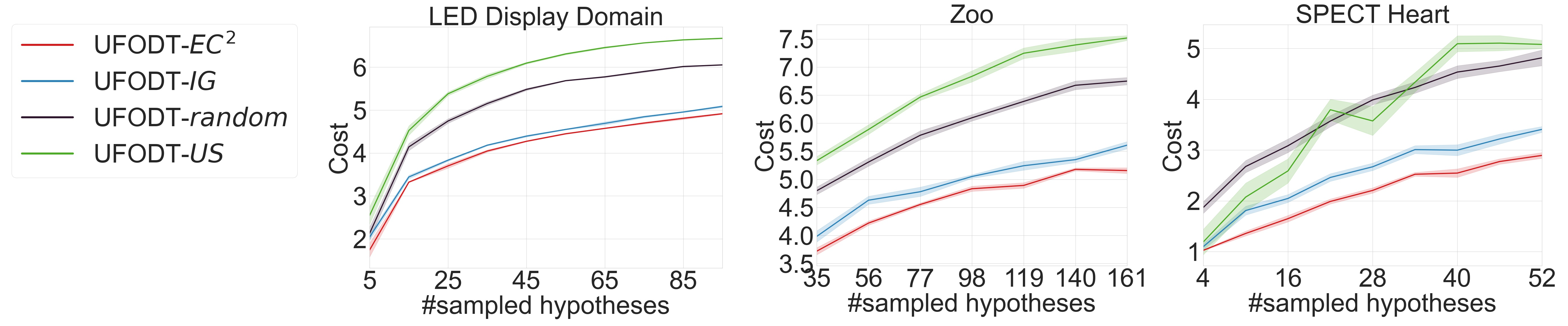}
    \vspace{-0.35cm}
    \caption{Training cost vs. \#sampled hypotheses. UFODT-EC\textsuperscript{2} yields the lowest cost in all three cases.}
    \label{fig:UCI_Cost_vs_Hypo}
    \vspace{-0.2cm}
\end{figure*}
\begin{figure*}[t]
    \centering
    \includegraphics[width=1.0\textwidth]{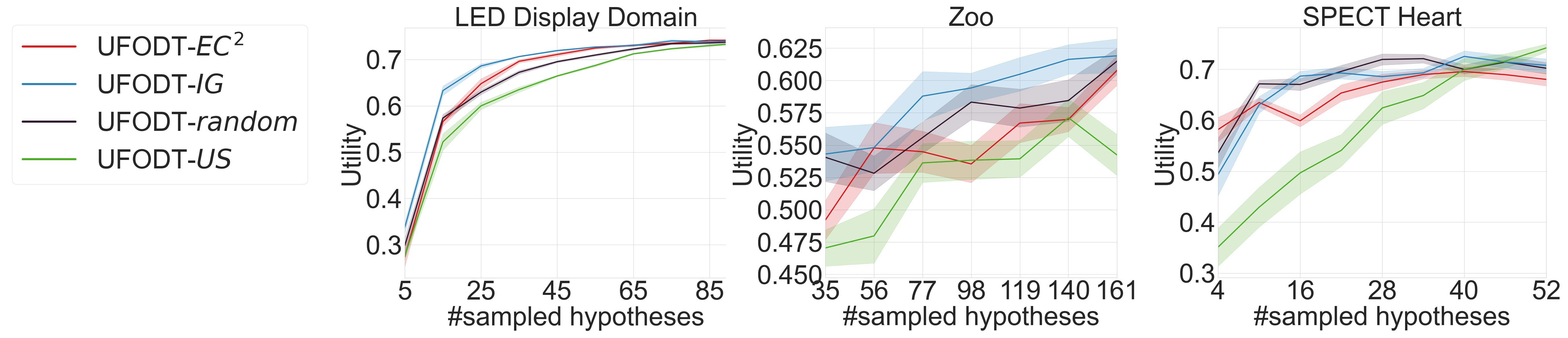}
    \vspace{-0.35cm}
    \caption{Training utility vs. \#sampled hypotheses.
    The utility achieved by UFODT-EC\textsuperscript{2} is similar to or even better than the other methods, while having a lower cost.
    }
    \label{fig:UCI_Utility_vs_Hypo}
    \vspace{-0.3cm}
\end{figure*}
\begin{figure*}[t]
    \centering
    \includegraphics[width=1.0\textwidth]{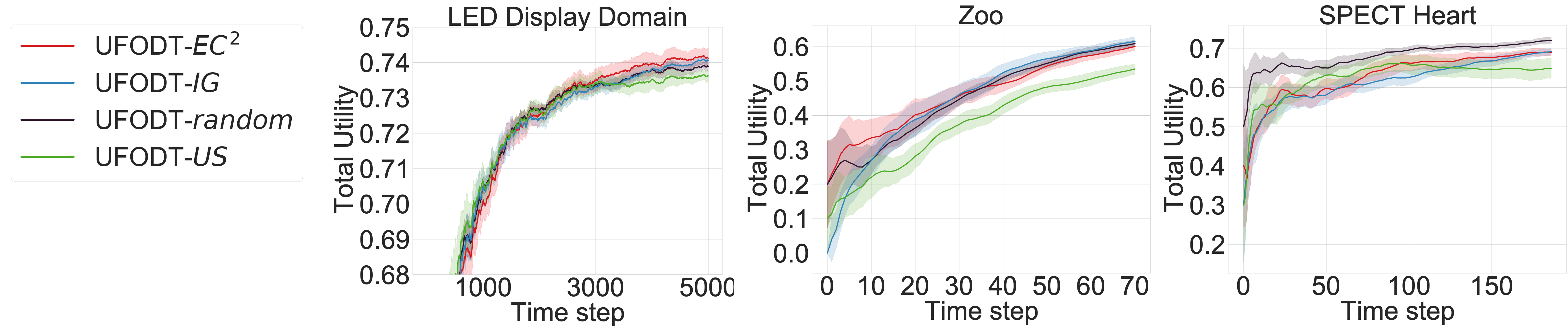}
    \vspace{-0.35cm}
    \caption{Training utility during training:
    UFODT-EC\textsuperscript{2} reaches a good utility during training steps with low cost. 
    % For the Voting Records dataset, it reaches a high utility in the first steps with a slightly higher cost than the others.
    }
    \label{fig:UCI_train_utility_in_progress}
    % \vspace{-0.3cm}
\end{figure*}

\begin{figure*}[t]
    \centering
    \subfigure[]{
    % \hspace{-20mm}
     \includegraphics[width=0.45\textwidth]{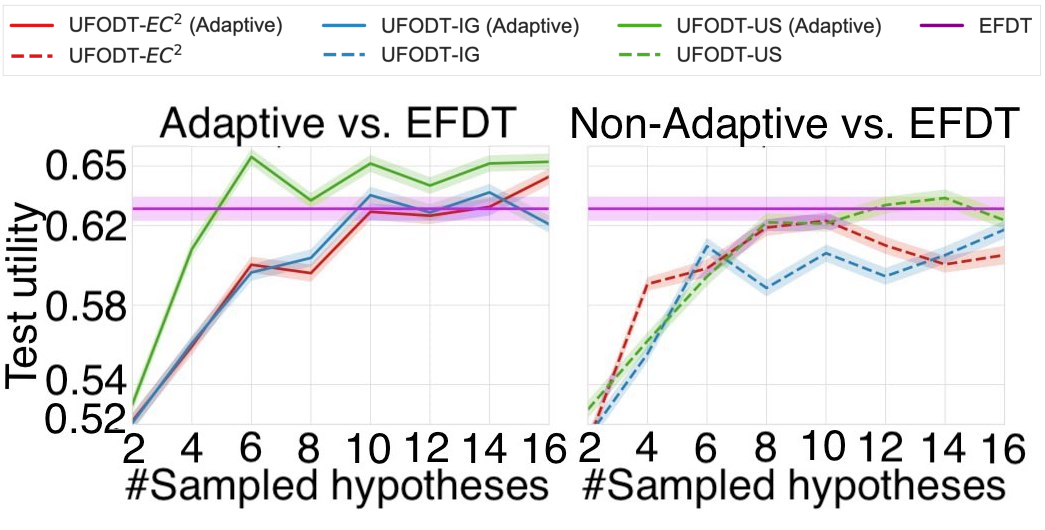}
    \label{fig:stagger-fig3}
     }
    % \caption{}
    % \vspace{-0.3cm}
% \end{figure}
% \begin{figure}[t]
    % \centering
    % \hspace{+10mm}
    % \subfigure[Time step vs. test utility]{    
    %  \includegraphics[width=0.4\textwidth]{Figures/Concept-Drift/concept-drift-stagger-comparisons-time-step-accuracy-with-vfdt-and-adaptive-or-not.png}
    %  }
    %  \vspace{3mm}
    \subfigure[]{
    % \vspace{+20mm}
     \includegraphics[width=0.5\textwidth]{Figures/Concept-Drift/concept-drift-stagger-comparisons-prior-accuracy-with-vfdt-and-adaptive-or-not.png}
     \label{fig:priorvsutility}
     }
    %  \vspace{-0.5cm}
    % \caption{
    % \label{fig:stagger-fig1}
    % \label{fig:stagger-fig2} 
    % }
    %  \label{fig:timevsutility}
    \caption{\ref{fig:stagger-fig3}: The effect of the number of sampled hypotheses on the test utility using the Stagger dataset. \ref{fig:priorvsutility}: Quality of prior vs. test utility using the Stagger dataset. Along $x$-axis, larger value corresponds to more accurate prior.}
\end{figure*}

\end{document}